\newcommand{\figwidththree}{0.32\textwidth}
\newcommand{\figwidthsidetwo}{0.33\textwidth}
\newcommand{\figwidthfour}{0.235\textwidth}
\newcommand{\xdim}{d}
\newcommand{\rdim}{k}
\newcommand{\sampiter}{t}
\newcommand{\stepsize}{\alpha}
\newcommand{\regwgt}{\eta}
\newcommand{\Actions}{\mathcal{A}}
\newcommand{\States}{\mathcal{S}}
\newcommand{\Pfcn}{\mathrm{Pr}}
\newcommand{\Rfcn}{r}
\newcommand{\fa}{x}
\newcommand{\Sset}{\mathcal{S}}
\newcommand{\Amat}{\mathbf{A}}
\newcommand{\pAmat}{\tilde{\mathbf{A}}}
\newcommand{\Bmat}{\mathbf{B}}
\newcommand{\Dmat}{\mathbf{D}}
\newcommand{\Hmat}{\mathbf{H}}
\newcommand{\Pmat}{\mathbf{P}}
\newcommand{\Qmat}{\mathbf{Q}}
\newcommand{\Smat}{\mathbf{S}}
\newcommand{\Umat}{\mathbf{U}}
\newcommand{\Vmat}{\mathbf{V}}
\newcommand{\defeq}{\mathrel{\overset{\makebox[0pt]{\mbox{\normalfont\tiny\sffamily def}}}{=}}}
\newcommand{\Lambdamat}{\boldsymbol{\Lambda}}
\newcommand{\Sigmamat}{\boldsymbol{\Sigma}}
\newcommand{\avec}{\mathbf{a}}
\newcommand{\bvec}{\mathbf{b}}
\newcommand{\pbvec}{\tilde{\mathbf{b}}}
\newcommand{\cvec}{\mathbf{c}}
\newcommand{\dvec}{\mathbf{d}}
\newcommand{\evec}{\mathbf{e}}
\newcommand{\gvec}{\mathbf{g}}
\newcommand{\hvec}{\mathbf{h}}
\newcommand{\uvec}{\mathbf{u}}
\newcommand{\vvec}{\mathbf{v}}
\newcommand{\wvec}{\mathbf{w}}
\newcommand{\xvec}{\mathbf{x}}
\newcommand{\E}{\mathbb{E}}
\newcommand{\eigmax}{\lambda_{\text{max}}}
\title{Effective sketching methods for value function approximation}
\author{Yangchen Pan, Erfan Sadeqi Azer and Martha White\\
Department of Computer Science\\
Indiana University Bloomington\\
yangpan@iu.edu, esadeqia@iu.edu, martha@indiana.edu} % LEAVE BLANK FOR ORIGINAL SUBMISSION.
\begin{document}

\maketitle

\begin{abstract}
High-dimensional representations, such as radial basis function networks or tile coding, are common choices for policy evaluation in reinforcement learning. Learning with such high-dimensional representations, however, can be expensive, particularly for matrix methods, such as least-squares temporal difference learning or quasi-Newton methods that approximate matrix step-sizes. In this work, we explore the utility of sketching for these two classes of algorithms. We highlight issues with sketching the high-dimensional features directly, which can incur significant bias. As a remedy, we demonstrate how to use sketching more sparingly, with only a left-sided sketch, that can still enable significant computational gains and the use of these matrix-based learning algorithms that are less sensitive to parameters. We empirically investigate these algorithms, in four domains with a variety of representations. Our aim is to provide insights into effective use of sketching in practice.
%particularly by understanding properties of the representation that are more amenable to sketching, as well as algorithmic settings that can most benefit from sketching. 
\end{abstract}

\section{INTRODUCTION}

A common strategy for function approximation in reinforcement learning is
to overparametrize: generate a large number of features to provide
a sufficiently complex function space. 
For example, one typical representation is a radial basis function network, 
where the centers for each radial basis function are chosen to exhaustively cover the
observation space.
Because the environment is unknown---particularly for the incremental learning setting---such an overparameterized representation 
is more robust to this uncertainty because a reasonable representation is guaranteed for any part
of the space that might be visited.
Once interacting with the environment, however, it is likely not all features
will become active, and that a lower-dimensional subspace will be visited. 

A complementary approach for this high-dimensional representation expansion in reinforcement learning,
therefore, is to use projections. In this way, we can overparameterize for robustness,
but then use a projection to a lower-dimensional space to make learning feasible.
For an effectively chosen projection, we can avoid discarding important information,
and benefit from the fact that the agent only visits a lower-dimensional subspace of the environment
in the feature space.

Towards this aim, we investigate the utility of sketching: projecting with a random matrix. 
Sketching has been extensively used for efficient communication
and solving large linear systems, 
with a solid theoretical foundation and a variety of different sketches \citep{woodruff2014sketching}.
%We investigate the utility of sketching for improving policy
%evaluation within reinforcement learning.
%
Sketching has been previously used in reinforcement learning, specifically to reduce the dimension of the features. 
\citet{bellemare2012sketch} replaced the standard biased hashing function used for tile coding \cite{sutton1996generalization}, instead using count-sketch.\footnote{They called the sketch the tug-of-war sketch, but it is more standard to call it count-sketch.}
\citet{ghavamzadeh2010lstd} investigated sketching features to reduce the dimensionality
and make it feasible to run least-squares temporal difference learning (LSTD) for policy evaluation. 
In LSTD, the value function is estimated by incrementally computing a $\xdim\times\xdim$ matrix $\Amat$, where $\xdim$ is the number of features, and an $\xdim$-dimensional vector $\bvec$, where the parameters are estimated as the solution to this linear system.
Because $\xdim$ can be large, they randomly project the features to reduce the matrix size to $\rdim\times\rdim$, with $\rdim \ll \xdim$. 
%They bounded the bias to the true value function,
%resulting from the bias incurred from the random projections.
%The second is for a theoretical investigation of sketching features,
%for an algorithm called random projections LSTD \citep{ghavamzadeh2010lstd}. 
%The key aspect of sketching as used in these approaches is
%to reduce the high-dimensional feature representation, by projecting to a lower-dimension
%with random matrices that have nice theoretical properties. 

For both of these previous uses of sketching, however, the resulting value function estimates
are biased. This bias, as we show in this work, can be quite significant, resulting in significant
estimation error in the value function for a given policy. As a result, any gains from using
LSTD methods---over stochastic temporal difference (TD) methods---are largely overcome by this bias. 
A natural question is if we can benefit from sketching, 
with minimal bias or without incurring any bias at all.

In this work, we propose to instead sketch the linear system
in LSTD. The key idea
is to only sketch the constraints of the system (the left-side of $\Amat$) rather than the variables
(the right-side of $\Amat$).
Sketching features, on the other hand, by design, sketches both constraints and variables. 
We show that even with a straightforward linear system solution,
the left-sided sketch can significantly reduce bias. 
We further show how to use this left-sided sketch within a quasi-Newton algorithm,
providing an unbiased policy evaluation algorithm that can still benefit from the computational improvements of sketching.

The key novelty in this work is designing such system-sketching algorithms
when also incrementally computing the linear system solution. 
There is a wealth of literature on sketching linear systems, to reduce computation.
In general, however, many sketching approaches cannot be applied
to the incremental policy evaluation problem, because
the approaches are designed for a static linear system. 
%In policy evaluation, however, we are typically incrementally updating the linear system with new data. 
For example, \citet{gower2015randomized} provide a host of possible
solutions for solving large linear systems. However, they assume access to $\Amat$ upfront, 
so the algorithm design, in memory and computation, is not suitable
for the incremental setting. 
Some popular sketching approaches, such as Frequent Directions \citep{ghashami2014improved},
has been successfully used for the online setting, for quasi-Newton algorithms \citep{luo2016efficient};
however, they sketch symmetric matrices, that are growing with number of samples.
%but requires access to all of $\Amat$. % and multiple passes over $\Amat$ to be used.
%This algorithm has been used for the online setting, but is used to sketch symmetric matrices
%and for a setting where the matrix is growing with samples. 
%We, on the other hand, need to incrementally approximate a fixed-size $\xdim\times\xdim$ matrix $\Amat$ and sketch it at the same time. 

This paper is organized as follows.
We first introduce the policy evaluation problem---learning a value function for a fixed policy---and provide background
on sketching methods. We then illustrate issues with only sketching features, in terms of quality of the value function approximation.
We then introduce the idea of using asymmetric sketching for policy evaluation with LSTD,
and provide an efficient incremental algorithm that is $O(\xdim \rdim)$ on each step. 
We finally highlight settings where we expect sketching to perform particularly well in practice,
and investigate the properties of our algorithm on four domains, and with a variety of representation properties. 

%Of additional interest in this work, we highlight several properties of
%sketching for policy evaluation. We investigate properties of feature representations
%that are more amenable to sketching. We also demonstrate that naively using sketching
%typically results in considerable bias, despite the strong theory behind sketching. We summarize
%several seemingly natural ways to use sketching that were ineffective.  

%In this work, we demonstrate that sketching 
%provides a significant benefit for LSTD for a particular
%representation type: radial basis functions.
%We propose several other natural ways that sketching could be useful
%for policy evaluation, and provide a series of negative results
%suggesting that such directions are problematic. 
%Our goal is to provide a first look into more effectively bringing sketching techniques
%into reinforcement learning, by reporting both positive and negative results exhaustively. 

\section{PROBLEM FORMULATION}

We address the policy evaluation problem within reinforcement learning,
where the goal is to estimate the value function for a given policy\footnote{To focus the investigation on sketching, we consider
the simpler on-policy setting in this work. Many of the results, however, generalize to the off-policy setting,
where data is generated according to a behavior policy different than the given target policy we wish to evaluate.}. 
As is standard, the agent-environment interaction is formulated as a Markov decision process $(\States, \Actions, \Pfcn, \Rfcn)$,
where 
$\States$ is the set of states, 
$\Actions$ is the set of actions, and 
$\Pfcn: \States \times \Actions \times \States \rightarrow [0,\infty)$ is the one-step state transition dynamics. 
On each time step $t = 1,2,3,...$, the agent selects an action according to its policy $\pi$, $A_t\sim\pi(S_t, \cdot)$,
with $\pi: \States \times \Actions \rightarrow [0, \infty)$ and transitions into a new state $S_{t+1}\sim\Pfcn(S_t, A_t, \cdot)$ and obtains scalar reward $R_{t+1} \defeq \Rfcn(S_t,A_t,S_{t+1})$. 

For policy evaluation, the goal is to estimate the value function, $v_\pi: \States \rightarrow \mathbb{R}$, which corresponds to the expected return when following policy $\pi$ 
\begin{equation*}
v_\pi(s) \defeq \mathbb{E}_\pi[G_t | S_t = s], 
\end{equation*} 
where $\mathbb{E}_\pi$ is the expectation over future states when selecting actions according to  $\pi$. The return, $G_t \in \mathbb{R}$ is the discounted sum of future rewards given actions are selected according to $\pi$:
\begin{align}
G_t &\defeq R_{t+1} + \gamma_{t+1} R_{t+2} + \gamma_{t+1}\gamma_{t+2} R_{t+3} + ... \\
&= R_{t+1} + \gamma_{t+1} G_{t+1} \nonumber
\end{align} 
where $\gamma_{t+1}\in[0,1]$ is a scalar that depends on $S_t, A_t, S_{t+1}$ and discounts the contribution of future rewards exponentially with time. A common setting, for example, is a constant discount. This recent generalization to state-dependent discount \citep{sutton2011horde,white2016unifying} enables either episodic or continuing problems, and so we adopt this more general formalism here.
 
 We consider linear function approximation to estimate the value function. In this setting, 
 the observations are expanded to a higher-dimensional space, such as through tile-coding, radial basis functions or Fourier basis. 
 Given this nonlinear encoding $\fa: \States \rightarrow \mathbb{R}^\xdim$, the value is approximated as
 $v_\pi(S_t) \approx  \wvec^\top \xvec_t$ for $\wvec\in\mathbb{R}^d$ and $\xvec_t \defeq \fa(S_t)$. 

One algorithm for estimating $\wvec$ is least-squares temporal difference learning (LSTD). 
The goal in LSTD($\lambda$) \citep{boyan1999least} is to minimize the mean-squared projected Bellman error, which can be represented
as solving the following linear system
\begin{align*}
\Amat &\defeq \E_\pi[\evec_\sampiter (\xvec_\sampiter-\gamma_{\sampiter+1}\xvec_{\sampiter+1})^\top]\\
\bvec &\defeq \E_\pi[R_{\sampiter+1}\evec_\sampiter] 
.
\end{align*}
where $\evec_\sampiter \defeq \gamma_{t+1} \lambda \evec_{\sampiter-1} + \xvec_t$ is called the eligibility trace
for trace parameter $\lambda \in [0,1]$. 
To obtain $\wvec$, the system $\Amat$ and $\bvec$ are incrementally estimated, to
 solve $\Amat \wvec = \bvec$.
For a trajectory $\{(S_\sampiter,A_\sampiter,S_{\sampiter+1},R_{\sampiter+1})\}_{\sampiter=0}^{T-1}$, let $\dvec_t \defeq \xvec_\sampiter - \gamma_{\sampiter+1} \xvec_{\sampiter+1}$, then the above two expected terms are usually computed via sample average
%\begin{align*}
%\Amat_T &\defeq \frac{1}{T} \sum_{\sampiter=0}^{T-1} \evec_\sampiter \dvec_t^\top \\
%\bvec_T &\defeq \frac{1}{T} \sum_{\sampiter=0}^{T-1} \evec_\sampiter R_{\sampiter+1},
%\end{align*}
that can be recursively computed, in a numerically stable way, as
\begin{align*}
\Amat_{t+1} &= \Amat_{t} + \frac{1}{t+1} \left( \evec_{t} \dvec_t^\top - \Amat_{t}\right)\\
\bvec_{t+1} &= \bvec_{t} + \frac{1}{t+1} \left(\evec_{t} R_{t+1} - \bvec_{t} \right)
\end{align*}
with $\Amat_0 = \zerovec$ and $\bvec_0  = \zerovec$. The incremental estimates $\Amat_t$ and $\bvec_t$
converge to $\Amat$ and $\bvec$. A naive algorithm, where $\wvec = \Amat_t^\inv \bvec_t$ is recomputed on each step, would result in $\mathcal{O}(\xdim^3)$ computation to compute the inverse $\Amat_t^\inv$. 
Instead, $\Amat_t^\inv$ is incrementally updated using the Sherman-Morrison formula, with $\Amat_0^\inv = \xi$ for a small $\xi>0$
\begin{align*}
\Amat_t^\inv &= \left(\frac{t-1}{t} \Amat_{t-1} + \frac{1}{t} \evec_{t} \dvec_t^\top\right)^\inv \\
&= \frac{t}{t-1}\left(\Amat_{t-1}^\inv +  \frac{ \Amat_{t-1}^\inv \evec_{t} \dvec_t^\top\Amat_{t-1}^\inv}{t-1+ \dvec_t^\top\Amat_{t-1}^\inv \evec_{t} }\right)
\end{align*}
 requiring 
 $\mathcal{O}(d^2)$ storage and computation per step.
Unfortunately, this quadratic cost is prohibitive for many incremental learning settings. In our experiments, even $\xdim=10,000$ was prohibitive, since $\xdim^2 = 100$ million. 

A natural approach to improve computation to solve for $\wvec$ is to use stochastic methods, 
such as TD($\lambda$) \citep{sutton1988learning}. This algorithm incrementally updates with $\wvec_{t+1} = \wvec_t + \stepsize \delta_t \evec_t$ for stepsize $\stepsize > 0$ and TD-error $\delta_t = R_{t+1} + (\gamma_{t+1} \xvec_{t+1} - \xvec_t)^\top \wvec_t$. The expectation of this update is $\E_\pi[\delta_t \evec_t] = \bvec - \Amat \wvec_t$; the fixed-point solutions are the same for both LSTD and TD, but LSTD corresponds to a batch solution whereas TD corresponds to a stochastic update. 
Though more expensive than TD---which is only $O(\xdim)$---LSTD does have several advantages. 
Because LSTD is a batch method, it summarizes all samples (within $\Amat$), and so can be more
sample efficient. Additionally, LSTD has no step-size parameter, using a closed-form solution for $\wvec$.   
%The advantages of LSTD include (1) no step-size parameter. 

Recently, there has been some progress in better balancing between TD and LSTD.
\citet{pan2017accelerated} derived a quasi-Newton algorithm, called accelerated gradient TD (ATD), 
%that uses a low-rank approximation to $\Amat$, 
giving an unbiased algorithm that has some of
the benefits of LSTD, but with significantly reduced computation because they only maintain a low-rank approximation to $\Amat$. 
The key idea is that $\Amat$ provides curvature information, and so can significantly improve step-size selection for TD and
so improve the convergence rate. The approximate $\Amat$ can still provide useful curvature information, but can be significantly cheaper to compute.
We use the ATD update to similarly obtain an unbiased algorithm, but use sketching approximations instead of low-rank approximations.
First, however, we investigate some of the properties of sketching.

\section{ISSUES WITH SKETCHING THE FEATURES}\label{sec_issues}

One approach to make LSTD more feasible is to project---sketch---the features. Sketching involves
sampling a random matrix $\Smat: \RR^{\rdim \times \xdim}$ from a family of matrices $\Sset$, to project a given $\xdim$-dimensional vector $\xvec$ to a (much smaller) $\rdim$-dimensional vector $\Smat \xvec$. The goal in defining this class of sketching matrices
is to maintain certain properties of the original vector. The following is a standard definition for such a family.
\begin{definition}[Sketching]\label{defSketch} Let $\xdim$ and $\rdim$ be positive integers, $\delta\in (0,1)$, and $\epsilon \in \RR^+$. Then, $\mathcal{S}\subset \RR^{\rdim\times \xdim}$ is called a family of sketching matrices with parameters $(\epsilon,\delta)$, if for a random matrix, $\Smat$, chosen uniformly at random from this family, we have that $\forall \xvec\in \RR^\xdim$
\begin{equation*}
\PP\Big[ (1-\epsilon) \| \xvec \|_2^2 \leq \| \Smat  \xvec \|_2^2 \leq (1+\epsilon) \| \xvec \|_2^2\Big] \geq 1-\delta.
\end{equation*}
where the probability is w.r.t. the distribution over $\Smat$.  %sketching matrices $\Smat$. 
\end{definition}
We will explore the utility of sketching the features with several
common sketches. 
These sketches all require $\rdim = \Omega(\epsilon^{-2} \ln (1/\delta) \ln d)$. 
% MARTHAC: do we need to say this about the subspace embedding property? Unless we use it or refer to it, 
% we can rely on the fact that these are common/popular sketches without justifying why, i.e., we're appealing to experts
% $\Smat \in \RR^{r \times d}$ which can satisfy the subspace embedding property when some requirement on its projected dimension is satisfied. We ignore those concrete technical requirements here due to space limit and refer readers to references. 
%\begin{enumerate}[nolistsep]
%\item 

\begin{figure*}[]
\vspace{-0.2cm}
	%\centering
	\subfigure[Mountain Car, tile coding]{
		\includegraphics[width=\figwidthsidetwo]{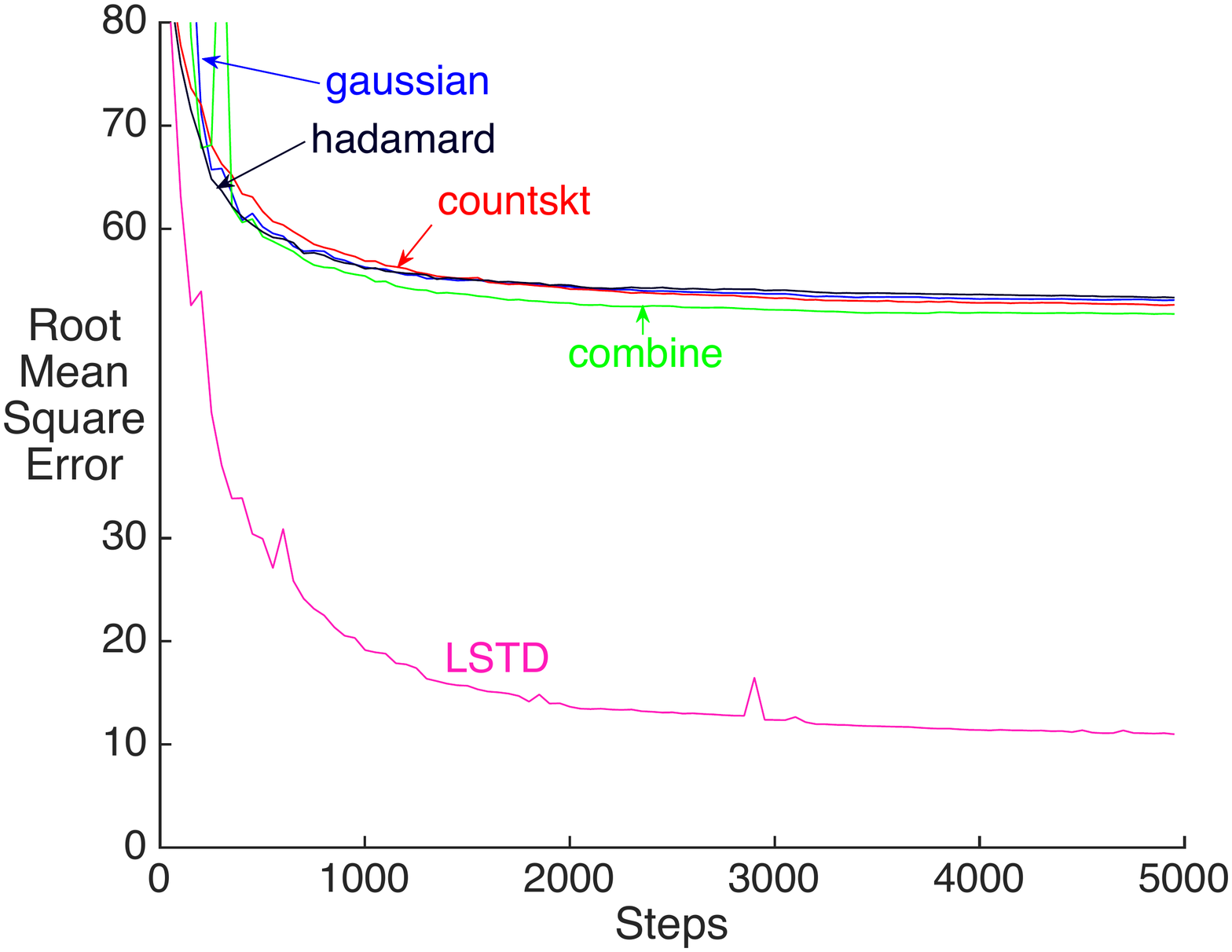}\label{fig:mcar_compskts_tile50r}}	
	\subfigure[Mountain Car, RBF]{
		\includegraphics[width=\figwidthsidetwo]{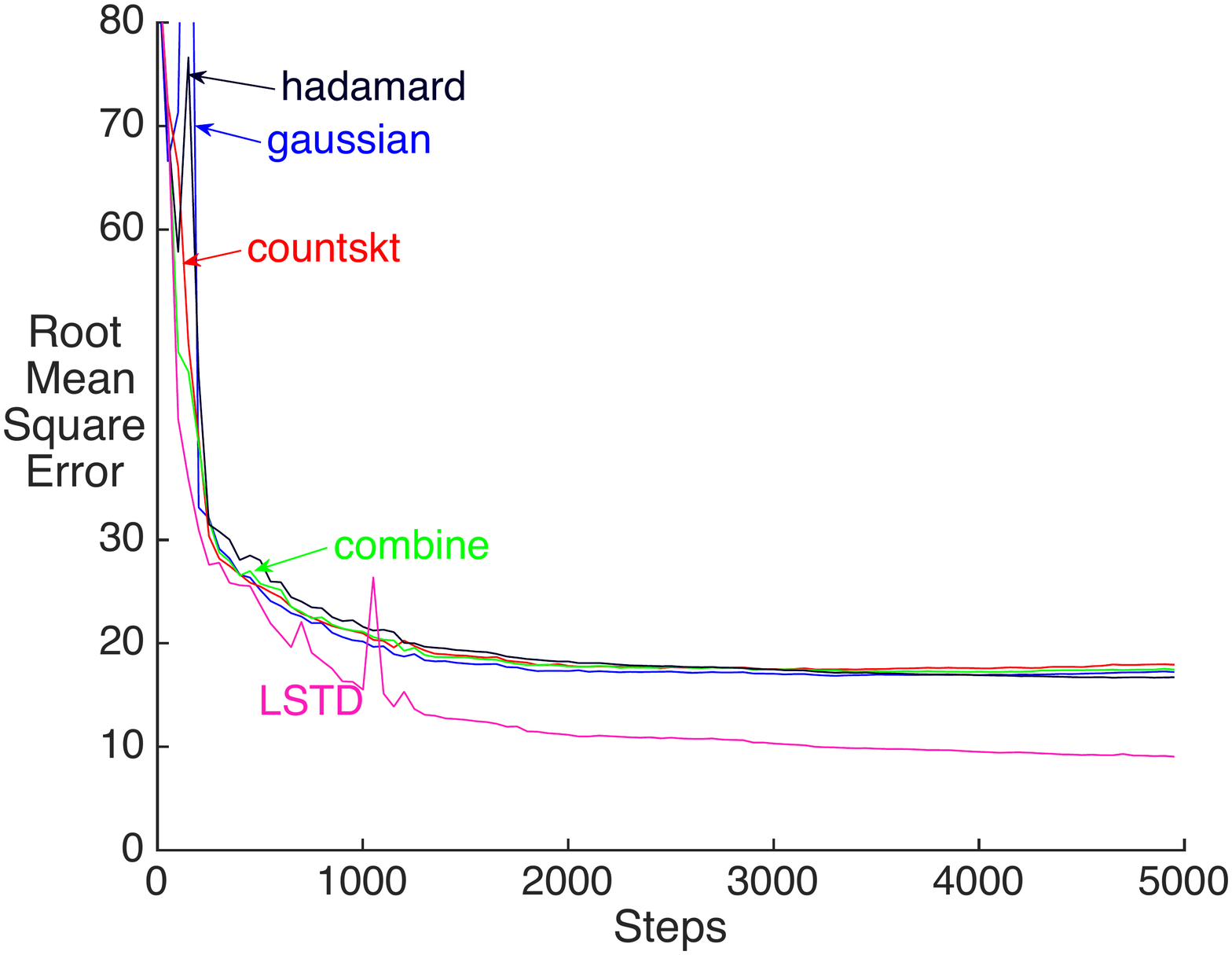}\label{fig:mcar_compskts_rbf50r}}	\\
	\subfigure[Puddle World, tile coding]{
		\includegraphics[width=\figwidthsidetwo]{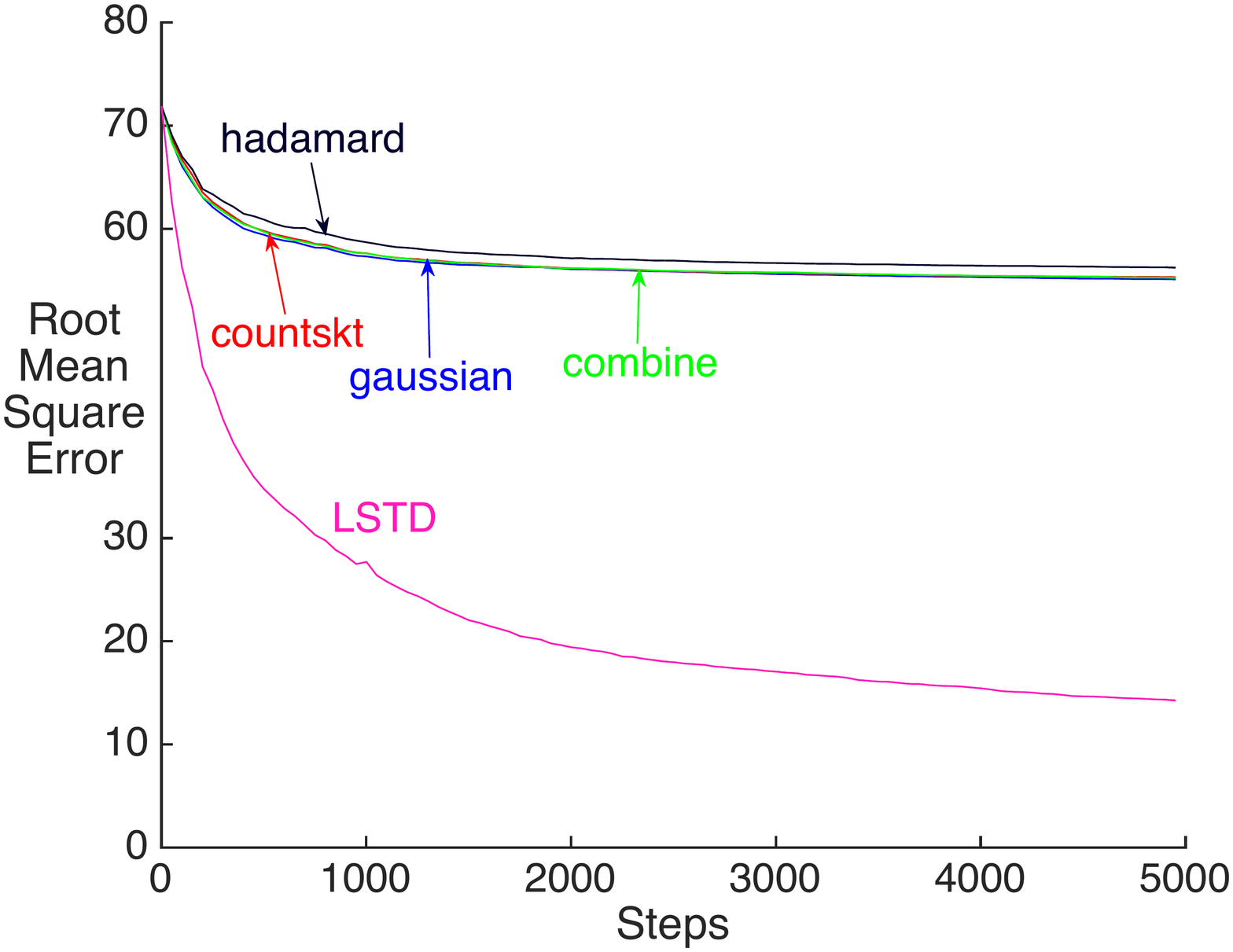}\label{fig:pd_compskts_tile50r}}	
	\subfigure[Puddle World, RBF]{
		\includegraphics[width=\figwidthsidetwo]{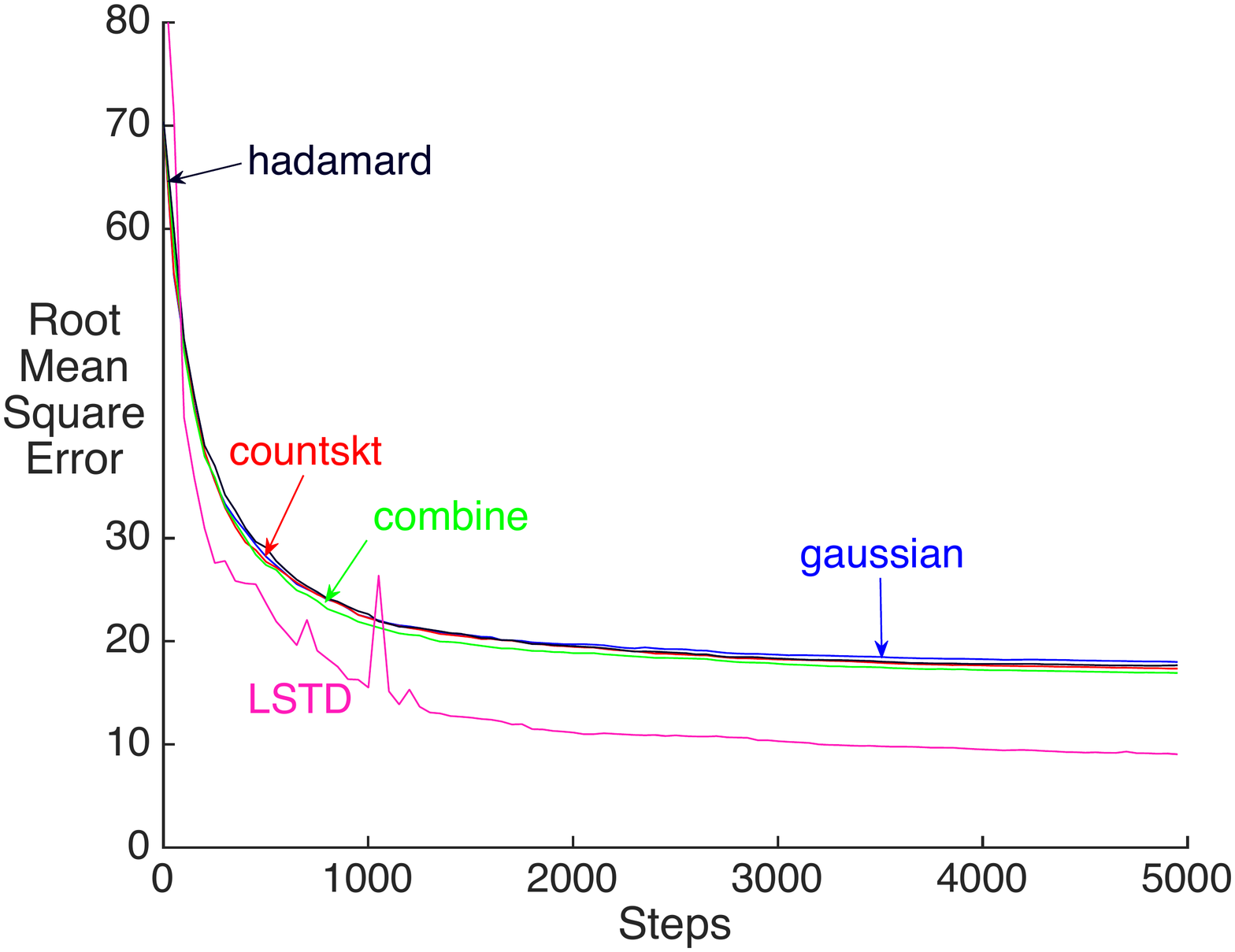}\label{fig:pd_compskts_rbf50r}}	
	\begin{minipage}{0.1cm}
		\vspace{-9cm}
		\end{minipage}
	\begin{minipage}{\figwidththree}
		\vspace{-9cm}
		\caption{ 
			%\small
			Efficacy of different sketches for sketching the features for LSTD, with $\rdim = 50$. The RMSE is w.r.t. the optimal value function, computed using rollouts.  
			LSTD($\lambda$) is included as the baseline, with $\wvec = \Amat^\inv \bvec$, with the other curves corresponding to different sketches of the features, to give $\wvec = (\Smat \Amat \Smat^\top)^\inv \Smat \bvec$ as used for the random projections LSTD algorithm. The RBF width in Mountain Car is $\sigma = 0.12$ times the range of the state space and in Puddle World is $\sigma = \sqrt{0.0072}$. The $1024$ centers for RBFs are chosen to uniformly cover the 2-d space in a grid.  For tile coding, we discretize each dimension by $10$, giving $10\times10$ grids, use $10$ tilings, and set the memory size as $1024$. The bias is high for tile coding features, and much better for RBF features, though still quite large. The different sketches perform similarly. 
		}\label{fig:compare_skts50r}
	\end{minipage}
	%\vspace{-0.5cm}	
\end{figure*}

\textbf{Gaussian random projections}, also known as the JL-Transform \citep{johnson1984extensions}, has each entry in $\Smat$ i.i.d. sampled from a Gaussian, $\mathcal{N}(0, \frac{1}{\rdim})$. 
% \cite{woodruff2014sketching}, theorem 2.1 and 2.3). 

\textbf{Count sketch} selects exactly one uniformly picked non-zero entry in each column, and sets that entry to either $1$ or $-1$ with equal probability \citep{charikar2002finding,gilbert2010sparse}. %\citep{anirban2010sparsejlt,moses2004freqitem,mikkel2012indephash}.
The Tug-of-War sketch \citep{alon1996space} performs very similarly to Count sketch in our experiments, and so we omit it. 
%was first introduced in \citep{alon1996space} as a sketching algorithm for problems in data stream model. The transformation matrix can be constructed by assigning $\pm 1$ values to elements in i.i.d. fashion. As explained in \citep{chakrabarti2009data} a smaller family of matrices, with dependent elements, will have the same behavior. This helps us store the information of the matrix in small space, i.e., in $\theta(\log(d))$ bits for fixed $\epsilon$ and $\delta$.

\textbf{Combined sketch} is the product of a count sketch matrix and a Gaussian projection matrix \citep{wang2015apractical,chris2015pca}.

\textbf{Hadamard sketch}---the Subsampled Randomized Hadamard Transform---is computed as $\Smat = \frac{1}{\sqrt{\rdim \xdim}}\Dmat \Hmat_d \Pmat$, where $\Dmat \in \RR^{\xdim \times \xdim}$ is a diagonal matrix with each diagonal element uniformly sampled from $\{1, -1\}$, $\Hmat_\xdim \in \RR^{\xdim \times \xdim}$ is a Hadamard matrix and $\Pmat \in \RR^{\xdim \times \rdim}$ is a column sampling matrix \citep{nir2006fastjlt}. 
%\citep{nir2006fastjlt,joel2011srht,christos2013srhtalgs}. 

Sketching provides a low-error between the recovery $\Smat^\top \Smat \xvec$ and the original $\xvec$, with high probability.
For the above families, the entries in $\Smat$ are zero-mean i.i.d. with variance 1,
giving $\E[\Smat^\top \Smat] = \eye$ over all possible $\Smat$. Consequently,
in expectation, the recovery $\Smat^\top \Smat \xvec$ is equal to $\xvec$. For a stronger result, 
a Chernoff bound can be used to bound the deviation of $\Smat^\top \Smat$ from this expected value: for the parameters $(\epsilon,\delta)$ of the matrix family, we get that $\PP\Big[ (1-\epsilon)I\prec \Smat^\top \Smat\prec (1+\epsilon)I\Big] \geq 1-\delta$.

%over all possible $\Smat$ from the family is equal to $\xvec$. To provide some intuition for this classical result,
%and better motivate why sketching methods should be useful for policy evaluation,
%we formally state the property in the below lemma.
%\begin{lemma}\label{lemStS}
%For a family of sketching matrices $\Sset$ with associated distribution $P$ over sketching matrices $\Smat$
%with zero-mean i.i.d. entries with variance 1, 
%\begin{equation*}
%P\Big[ (1-\epsilon)I\prec \Smat^\top \Smat\prec (1+\epsilon)I\Big] \geq 1-\delta.
%\end{equation*}
%\end{lemma}
%\begin{proof}[Proof Sketch]
%The main idea is to characterize the elements of the matrix $\Smat^\top \Smat$. 
%Because each element in $\Smat$ is independent, for each $i,j \neq i',j'$
%%
%\begin{equation*}
%\E[\Smat_{i,j} \Smat_{i',j'}]=\E[\Smat_{i,j}] E[\Smat_{i',j'}]=0.
%\end{equation*}
%%
%For $i,j = i',j'$, 
%%
%\begin{equation*}
%\E[\Smat_{i,j} \Smat_{i,j}]=\E[\Smat_{i,j}^2] = 1
%.
%\end{equation*}
%%
%The deviation of $\Smat^\top \Smat$ is then bounded using the Chernoff bound, when there is at least logarithmic number of such independent terms.
%\end{proof}

These properties suggest that using sketching for the feature vectors should provide effective approximations. \cite{bellemare2012sketch} showed that they could use these projections for tile coding, rather than the biased hashing function that is typically used, to improve learning performance for the control setting. The efficacy, however, of sketching given features, versus using the unsketched features, is less well-understood. 
 
We investigate the properties of sketching the features, shown in Figure~\ref{fig:compare_skts50r} with a variety of sketches in two benchmark domains for RBF and tile-coding representations (see \cite[Chapter 8]{sutton1998reinforcement} for an overview of these representations). For both domains, the observations space is 2-dimensional, with expansion to $\xdim = 1024$ and $\rdim = 50$. The results are averaged over 50 runs, with $\xi,\lambda$ swept over 13 values, with ranges listed in Appendix \ref{app_experiments}. We see that sketching the features can incur significant bias, particularly for tile coding, even with a reasonably large $\rdim = 50$ to give $O(\xdim \rdim)$ runtimes. This bias reduces with $\rdim$, but remains quite high and so is likely too unreliable for practical use.
% useful to use these algorithms in practice. 
%AdditionallyOne can see that though they have some different theoretical properties and time efficiency, their practical learning performances are similar. 
%For RBFs, the width $0.12$ times the range of the state space. For both PLSTD (two-sided projection LSTD) and LSTD, we sweep over the initialization parameter ($\eta$), and for ATD-Sketch set the regularizer = 0.00001.

%\section{Asymmetric sketching and sketching the linear system}
\section{SKETCHING THE LINEAR SYSTEM}

All of the work on sketching within reinforcement learning has investigated 
sketching the features; however, we can instead consider sketching the linear system,
$\Amat \wvec = \bvec$. For such a setting, we can sketch the left and right subspaces
of $\Amat$ with different sketching matrices, $\Smat_L \in \RR^{\rdim_L \times \xdim}$ and
$\Smat_R \in \RR^{\rdim_R \times \xdim}$. Depending on the choices of $\rdim_L$ and $\rdim_R$,
we can then solve the smaller system $\Smat_L \Amat \Smat_R^\top \Smat_R \wvec = \Smat_L \bvec$ efficiently. 
The goal is to better take advantage of the properties for the different sides of an asymmetric matrix $\Amat$.

One such natural improvement should be in one-sided sketching.
By only sketching from the left, for example, and setting $\Smat_R = \eye$,
we do not project $\wvec$. Rather, we only project the constraints to the linear
system $\Amat \wvec = \bvec$. Importantly, this does not introduce bias: the original solution
$\wvec$ to $\Amat \wvec = \bvec$ is also a solution to $\Smat\Amat \wvec = \Smat\bvec$ for any sketching
matrix $\Smat$. The projection, however, removes uniqueness in terms of the solutions $\wvec$,
since the system is under-constrained. 
Conversely, by only sketching from the right, and setting $\Smat_L = \eye$,
we constrain the space of solutions to a unique set, and do not remove any constraints.
For this setting, however, it is unlikely that $\wvec$ with $\Amat \wvec = \bvec$ satisfies $\Amat \Smat^\top \wvec = \bvec$.

The conclusion from many initial experiments is that the key benefit from asymmetric sketching is
when only sketching from the left.
%In our experiments below, we show that sketching from the left can significantly
%improve the solution. 
We experimented with all pairwise combinations of Gaussian random projections, Count sketch, Tug-of-War sketch and Hadamard sketch for $\Smat_L$ and $\Smat_R$. We additionally experimented with only sketching from the right, setting $\Smat_L = \eye$. 
In all of these experiments, we found asymmetric
sketching provided little to no benefit over using $\Smat_L = \Smat_R$ and that sketching only
from the right also performed similarly to using $\Smat_L = \Smat_R$. We further investigated
column and row selection sketches (see \cite{wang2015apractical} for a thorough overview), but also found
these to be ineffective. We therefore proceed with an investigation into effectively using left-side sketching. 
In the next section, we provide an efficient $\mathcal{O}(\xdim \rdim)$ algorithm to compute $(\Smat_L \Amat)^\pinv$,
to enable computation of $\wvec = (\Smat_L \Amat)^\pinv \Smat_L \bvec$ and for use
within an unbiased quasi-Newton algorithm. 

We conclude this section with an interesting connection to a data-dependent projection
method that has been used for policy evaluation, that further motivates the utility of sketching
only from the left. 
%Apriori, sketching only from the left is the most promising.
%There is not much sketching work for asymmetric square matrices,
%but one can see that $\Smat^\top \Amat \wvec = \Smat^\top \bvec$ still
%contains $\wvec = \Amat^{-1} \bvec$ as a solution,
%whereas that is unlikely to be the case when sketching from the right. 
%Further, a data-dependent sketch that implicitly only sketches from the left 
%has been demonstrated to be effective. 
This algorithm, called truncated LSTD (tLSTD) \citep{gehring2016incremental},
incrementally maintains a rank $\rdim$ approximation of $\Amat$ matrix, using
an incremental singular value decomposition. 
%The space complexity of the algorithm is reduced to $O(\xdim \rdim)$,
%and the run-time is either $O(\xdim \rdim)$ per-step for a mini-batch version or up to $O(\xdim \rdim + \rdim^3)$ for the fully incremental version. 
We show below that this approach corresponds to projecting $\Amat$ from the left with the top $\rdim$ left singular vectors. 
This is called a data-dependent projection, because the projection depends on the observed data,
as opposed to the data-independent projection---the sketching matrices---which is randomly sampled independently of the data. 
%
%\begin{definition}
%For a matrix $\Xmat\in \RR^{\xdim\times \xdim}$ and a positive integer $\rdim$, $\Xmat_\rdim\in \RR^{\xdim\times \rdim}$ is defined to contain the first $\rdim$ columns of $\Xmat$.
%\end{definition}
%
%
\begin{proposition}
Let $\Amat=\Umat\Sigmamat \Vmat^\top$ be singular value decomposition of the true $\Amat$.
Assume the singular values are in decreasing order and let $\Sigmamat_\rdim$
be the top $\rdim$ singular values, with corresponding $\rdim$ left singular vectors $\Umat_\rdim$ and $\rdim$ right singular vectors $\Vmat_\rdim$.  
Then the solution $\wvec = \Vmat_\rdim \Sigmamat_\rdim^\pinv \Umat_\rdim^\top \bvec$ (used for tLSTD) corresponds
to LSTD using asymmetric sketching with $\Smat_L=\Umat_\rdim^\top$ and $\Smat_R=\eye$. 
\end{proposition}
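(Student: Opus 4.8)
The plan is to unwind both sides of the claimed equivalence and check they produce the same vector $\wvec$. On the sketching side, setting $\Smat_R=\eye$ collapses the sketched system $\Smat_L\Amat\Smat_R^\top\Smat_R\wvec=\Smat_L\bvec$ to the under-constrained system $\Smat_L\Amat\wvec=\Smat_L\bvec$, whose minimum-norm solution is $\wvec=(\Smat_L\Amat)^\pinv\Smat_L\bvec$ — exactly the quantity the next section computes incrementally. So with $\Smat_L=\Umat_\rdim^\top$ it suffices to show $(\Umat_\rdim^\top\Amat)^\pinv\Umat_\rdim^\top\bvec=\Vmat_\rdim\Sigmamat_\rdim^\pinv\Umat_\rdim^\top\bvec$; since $\bvec$ is arbitrary, it is enough to establish the operator identity $(\Umat_\rdim^\top\Amat)^\pinv\Umat_\rdim^\top=\Vmat_\rdim\Sigmamat_\rdim^\pinv\Umat_\rdim^\top$, or just to evaluate the left-hand side directly.

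First I would substitute the SVD $\Amat=\Umat\Sigmamat\Vmat^\top$ and use orthonormality of the columns of $\Umat$ to get $\Umat_\rdim^\top\Umat=[\,\eye_\rdim\ \ \zero\,]$, so that $\Umat_\rdim^\top\Amat=[\,\eye_\rdim\ \ \zero\,]\Sigmamat\Vmat^\top=\Sigmamat_\rdim\Vmat_\rdim^\top$, an $\rdim\times\xdim$ matrix (here $\Sigmamat_\rdim$ is the $\rdim\times\rdim$ leading diagonal block and $\Vmat_\rdim^\top$ the leading $\rdim$ rows of $\Vmat^\top$). Second, I would read off a thin SVD of this matrix: $\Sigmamat_\rdim\Vmat_\rdim^\top=\eye_\rdim\,\Sigmamat_\rdim\,\Vmat_\rdim^\top$ is already in SVD form, because $\eye_\rdim$ is orthogonal, $\Sigmamat_\rdim$ is diagonal with nonnegative entries in decreasing order, and $\Vmat_\rdim$ has orthonormal columns ($\Vmat_\rdim^\top\Vmat_\rdim=\eye_\rdim$ since $\Vmat$ is orthogonal). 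Hence $(\Sigmamat_\rdim\Vmat_\rdim^\top)^\pinv=\Vmat_\rdim\Sigmamat_\rdim^\pinv\eye_\rdim=\Vmat_\rdim\Sigmamat_\rdim^\pinv$, and since $\Smat_L\bvec=\Umat_\rdim^\top\bvec$ we conclude $\wvec=(\Umat_\rdim^\top\Amat)^\pinv\Umat_\rdim^\top\bvec=\Vmat_\rdim\Sigmamat_\rdim^\pinv\Umat_\rdim^\top\bvec$, which is precisely the tLSTD estimate.

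The step I would be most careful about is the pseudo-inverse manipulation: one must resist the temptation to use the (generally false) identity $(MN)^\pinv=N^\pinv M^\pinv$ and instead justify $(\Sigmamat_\rdim\Vmat_\rdim^\top)^\pinv=\Vmat_\rdim\Sigmamat_\rdim^\pinv$ directly from the SVD, which is valid here only because $\Vmat_\rdim$ has orthonormal columns. A secondary bookkeeping point is possibly-zero singular values among the top $\rdim$: writing $\Sigmamat_\rdim^\pinv$ rather than $\Sigmamat_\rdim^{\inv}$ keeps the statement correct in that degenerate case, and the SVD-based argument is unaffected. If preferred, the identity can also be read through tLSTD's own viewpoint: tLSTD solves the system with $\Amat$ replaced by its rank-$\rdim$ truncation $\Umat_\rdim\Sigmamat_\rdim\Vmat_\rdim^\top$, and $(\Umat_\rdim\Sigmamat_\rdim\Vmat_\rdim^\top)^\pinv=\Vmat_\rdim\Sigmamat_\rdim^\pinv\Umat_\rdim^\top=(\Umat_\rdim^\top\Amat)^\pinv\Umat_\rdim^\top$, exhibiting the two solutions as literally the same formula.
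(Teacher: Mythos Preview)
Your proof is correct and follows essentially the same route as the paper: both substitute the SVD, use orthonormality of the columns of $\Umat$ to obtain $\Umat_\rdim^\top\Amat=\Sigmamat_\rdim\Vmat_\rdim^\top$, and then take the pseudo-inverse to reach $\wvec=\Vmat_\rdim\Sigmamat_\rdim^\pinv\Umat_\rdim^\top\bvec$. Your version is somewhat more explicit in justifying the pseudo-inverse step via the thin SVD and in handling the possibly-rank-deficient case, but the argument is the same.
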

\vspace{-0.5cm}
\begin{proof}
We know $\Umat = [\uvec_1, \ldots, \uvec_\xdim]$ for singular vectors $\uvec_i \in \RR^\xdim$ with $\uvec_i^\top \uvec_i = 1$ and $\uvec_i^\top \uvec_j = 0$ 
for $i\neq j$. Since $\Umat_k =  [\uvec_1, \ldots, \uvec_\rdim]$, we get that
$\Umat_k^\top \Umat = [\eye_{\rdim} \ \ \zerovec_{\xdim-\rdim}] \in \RR^{\rdim \times \xdim}$ for $\rdim$-dimensional identity matrix $\eye_k$ and zero matrix $\zerovec_{\xdim-\rdim} \in \RR^{\rdim \times (\xdim-\rdim)}$. Then we get that
$\Smat_L \bvec = \Umat_k^\top \bvec$ and 
%
%\begin{equation*}
$\Smat_L \Amat 
= [\eye_{\rdim} \ \ \zerovec_{\xdim-\rdim}] \Sigmamat \Vmat^\top
= \Sigmamat_\rdim \Vmat^\top
= \Sigmamat_\rdim \Vmat_\rdim^\top
.$
%\end{equation*}
%
%
Therefore, $\wvec = (\Smat_L \Amat)^\pinv \Smat_L \bvec = \Vmat_\rdim \Sigmamat_\rdim^\pinv \Umat_\rdim^\top \bvec$.
%\begin{equation*}
%\Umat_k^\top \Umat = 
%\begin{array}
%1 & 0 & \ldots &0\\
%0&1&0&\ldots \\
%& \vdots & &\\
%0&0&\ldots&1 \\
%\end{array}
%\end{equation*}
\end{proof}

\section{LEFT-SIDED SKETCHING ALGORITHM}\label{sec_left}

In this section, we develop an efficient approach to use the smaller, sketched matrix $\Smat \Amat$
for incremental policy evaluation. The most straightforward way to use $\Smat\Amat$ is to 
incrementally compute $\Smat\Amat$, and periodically solve $\wvec = (\Smat\Amat)^\pinv \Smat\bvec$. This
costs $O(\xdim \rdim)$ per step, and $O(\xdim^2 \rdim)$ every time the solution is recomputed. To maintain
$O(\xdim \rdim)$ computation per-step, this full solution could only be computed every $\xdim$ steps,
which is too infrequent to provide a practical incremental policy evaluation approach. Further, because it is an underconstrained
system, there are likely to be infinitely many solutions to
$\Smat\Amat \wvec = \Smat\bvec$; amongst those solutions, we would like to sub-select amongst the unbiased solutions
to $\Amat \wvec = \bvec$. 

We first discuss how to efficiently maintain $(\Smat\Amat)^\pinv$, and then describe
how to use that matrix to obtain an unbiased algorithm. Let $\pAmat \defeq \Smat\Amat \in \RR^{\rdim \times \xdim}$.
For this underconstrained system with $\pbvec \defeq \Smat \bvec$, the minimum norm solution to $\pAmat \wvec = \pbvec$ is\footnote{We show in Proposition \ref{lem_rowrank}, Appendix \ref{app_theory}, that $\pAmat$ is full row rank with high probability. This property is required to ensure that the inverse of $\pAmat \pAmat^\top$ exists. In practice, this is less of a concern, because we initialize
the matrix $\pAmat_0 \pAmat_0^\top$ with a small positive value, ensuring invertibility for $\pAmat_t \pAmat_t^\top$ for finite $t$.}
$\wvec = \pAmat^\top (\pAmat \pAmat^\top)^\inv \pbvec$ and $\pAmat^\pinv = \pAmat^\top (\pAmat \pAmat^\top)^\inv \in \RR^{\xdim \times \rdim}$. 
To maintain $\pAmat_t^\pinv$ incrementally, therefore, we simply need to maintain $\pAmat_t$ incrementally 
and the $\rdim\times\rdim$-matrix $(\pAmat_t \pAmat_t^\top)^\inv$ incrementally. 
% $(\Smat\Amat)^\pinv$ and solve $\wvec = (\Smat\Amat)^\pinv \Smat\bvec$, we need an incremental method
%to compute $(\Smat\Amat)^\pinv$. 
%Using a one-sided projection to obtain $\wvec = \Amat^\inv \bvec$, we need to determine how
%We developed an incremental updating rule for computing the right pseudoinverse of a general full row rank matrix $\Xmat_t \in \RR^{r \times d}$ with rank-one update $\zvec \dvec^\top, \zvec \in \RR^r, \dvec \in \RR^d$. The right pseudoinverse can be written as $\Xmat^\pinv = \Xmat_t^\top (\Xmat_t \Xmat_t^\top)^\inv $. Then we mainly want to incrementally update $(\Xmat_t \Xmat_t^\top)^\inv$. 
%By some simple algebra computation, we can get the following. 
Let $\tilde{\evec}_t \defeq \Smat\evec_t$, $\dvec_t \defeq \dvec_t$
and $\hvec_t \defeq \pAmat_{t}\dvec_t$. We can update the sketched system in $O(\xdim \rdim)$ time and space
\begin{align*}
\pAmat_{\sampiter+1} &= \pAmat_{\sampiter} + \tfrac{1}{\sampiter+1}\left(\tilde{\evec}_{\sampiter} \dvec_{\sampiter}^\top -\pAmat_{\sampiter}\right)\\
\pbvec_{\sampiter+1} &= \pbvec_{\sampiter} + \tfrac{1}{\sampiter+1}\left(\tilde{\evec}_{\sampiter} R_{\sampiter+1} -\pbvec_{\sampiter}\right)
\end{align*}
To maintain $(\pAmat_t \pAmat_t^\top)^\inv$ incrementally, notice that the unnormalized update is
\begin{align*}
\pAmat_{t+1} \pAmat_{t+1}^\top 
&= (\pAmat_{t} + \tilde{\evec}_t \dvec_t^\top)(\pAmat_{t} +\tilde{\evec}_t \dvec_t^\top) \\
&= \pAmat_{t} \pAmat_{t}^\top + \tilde{\evec}_t  \hvec_t^\top + \hvec_t \tilde{\evec}_t ^\top + ||\dvec_t||_2^2 \| \tilde{\evec}_t  \tilde{\evec}_t ^\top
.
\end{align*}
Hence, $(\pAmat_{t+1} \pAmat_{t+1}^\top)^\inv$ can be updated from $(\pAmat_{t} \pAmat_{t}^\top)^\inv$ by applying
the Sherman-Morrison update three times. 
For a normalized update, based on samples, the update is
\begin{align*}
\pAmat_{t+1} \pAmat_{t+1}^\top 
&= \left(\tfrac{t}{t+1}\right)^2\pAmat_{t} \pAmat_{t}^\top 
+  \tfrac{t}{(t+1)^2}\left(\tilde{\evec}_{t}  \hvec_{t}^\top + \hvec_{t} \tilde{\evec}_{t} ^\top\right) \\
&+ \tfrac{1}{(t+1)^2}||\dvec_{t}||_2^2 \| \tilde{\evec}_{t}  \tilde{\evec}_{t}^\top
\end{align*}
%
%In our case, define $\pAmat_t = \Smat^\top \Amat_t, \tilde{\zvec}_t = \Smat^\top \zvec_t$. Plugging those into the above equation and adding normalization factors, the updating rule will be the following. 
%\begin{align*}
%\pAmat_t \pAmat_t^\top &= (\frac{t-1}{t}\pAmat_{t-1} + \frac{1}{t} \pzvec_t \dvec_t^\top)(\frac{t-1}{t} \pAmat_{t-1} +  \frac{1}{t} \pzvec_t \dvec_t^\top) \\
%\hvec_t & \defeq \pAmat_{t-1}\dvec_t \\
%\pAmat_t \pAmat_t^\top &=(\frac{t-1}{t})^2 \pAmat_{t-1}\pAmat_{t-1}^\top  + \frac{t-1}{t^2}(\pzvec_t \hvec_t^\top + \hvec_t \pzvec_t^\top) \\
%&+ \frac{1}{t^2}||\dvec_t||_2^2 \pzvec_t \pzvec_t^\top
%\end{align*}
%Additionally, we can update 
%
%\begin{align*}
%\pAmat_t \pAmat_t^\top 
%&= \left(\tfrac{t-1}{t}\right)^2\pAmat_{t-1} \pAmat_{t-1}^\top \\
%&\ \ \ \ \ \ +  \tfrac{t-1}{t^2}\left(\tilde{\evec}_t  \hvec_t^\top + \hvec_t %\tilde{\evec}_t ^\top\right) + \tfrac{1}{t^2}||\dvec_t||_2^2 \| \tilde{\evec}_t  %\tilde{\evec}_t ^\top
%\end{align*}
We can then compute $\wvec_t = \pAmat_t (\pAmat_t \pAmat_t^\top)^\pinv \pbvec_t$ on each step.

This solution, however, will provide the minimum norm solution, rather than the unbiased solution,
even though the unbiased solution is feasible for the underconstrained system. To instead push the preference
towards this unbiased solution, we use the stochastic approximation algorithm, called ATD \citep{pan2017accelerated}.
This method is a quasi-second order method, that relies on a low-rank approximation $\hat{\Amat}_t$ to $\Amat_t$;
using this approximation, the update is
$\wvec_{t+1} = \wvec_t + (\stepsize_t\hat{\Amat}^\pinv_t + \eta \eye) \delta_t \evec_t$. 
Instead of being used to explicitly solve for $\wvec$, the approximation matrix is used to provide
curvature information. The inclusion of $\eta$ constitutes a small regularization component, that pushes
the solution towards the unbiased solution.  

We show in the next proposition that for our alternative approximation, we still obtain unbiased solutions. 
We use results for iterative methods for singular linear systems \citep{shi2011convergence,wang2013ontheconvergence}, since
$\Amat$ may be singular. $\Amat$ has been shown to be positive semi-definite under standard assumptions on the MDP \citep{yu2015onconvergence}; for simplicity, we assume $\Amat$ is positive semi-definite, instead of providing these MDP assumptions. 
\begin{assumption}
For $\Smat \in \RR^{\rdim \times \xdim}$ and
$\Bmat = \alpha (\Smat \Amat)^\pinv \Smat + \eta \eye$ with $\Bmat \in \RR^{\xdim \times \xdim}$, the matrix $\Bmat \Amat$
is diagonalizable. 
\end{assumption}
\begin{assumption}
$\Amat$ is positive semi-definite. 
\end{assumption}
\begin{assumption}
$\stepsize \in (0,\tfrac{1}{2})$ and $0 < \regwgt \le \tfrac{1}{2\eigmax(\Amat)}$ where $\eigmax(\Amat)$ is the maximum eigenvalue of $\Amat$.
\end{assumption}
\begin{theorem} \label{thm_main}
Under Assumptions 1-3, %for $\delta > 0$ where $\rdim = \log \delta$, with probability $1-\delta$,
the expected updating rule 
$\wvec_{t+1} = \wvec_t + \E_\pi[\Bmat \delta_t \evec_t]$
 converges to a fixed-point $\wvec^\star = \Amat^\pinv \bvec$. 
\end{theorem}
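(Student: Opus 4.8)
The plan is to turn the expected update into an affine fixed-point iteration and then invoke the theory of semi-convergent iterations for (possibly singular) linear systems. Using $\E_\pi[\delta_t\evec_t] = \bvec - \Amat\wvec_t$ from Section~2, the rule becomes $\wvec_{t+1} = (\eye - \Bmat\Amat)\wvec_t + \Bmat\bvec$. The standard on-policy assumptions that already give Assumption~2 also make the LSTD equation consistent, $\bvec \in \col(\Amat)$, so $\wvec^\star = \Amat^\pinv\bvec$ satisfies $\Amat\wvec^\star = \bvec$ and hence $\Bmat\Amat\wvec^\star = \Bmat\bvec$: it is a fixed point of the recursion. Writing $M \defeq \eye - \Bmat\Amat$ and $\evec_t \defeq \wvec_t - \wvec^\star$, subtraction yields $\evec_{t+1} = M\evec_t$, so the whole problem reduces to showing $M^t\evec_0 \to 0$ for the relevant initialization.

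The core technical step is to localize the spectrum of $\Bmat\Amat$. Write $\Bmat\Amat = \alpha\,(\Smat\Amat)^\pinv(\Smat\Amat) + \eta\,\Amat = \alpha\,\Pi + \eta\,\Amat$, where $\Pi$ is the orthogonal projector onto $\row(\Smat\Amat)\subseteq\row(\Amat)$. For an eigenvalue $\mu$ with unit eigenvector $v$ (complex in general, since $\Amat$ need not be symmetric), $\mu = v^*(\Bmat\Amat)v = \alpha\,v^*\Pi v + \eta\,v^*\Amat v$, with $v^*\Pi v \in [0,1]$ and $\mathrm{Re}(v^*\Amat v) \ge 0$ by Assumption~2. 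Assumption~3 ($\alpha < \tfrac{1}{2}$, $\eta \le \tfrac{1}{2\eigmax(\Amat)}$) then pins every nonzero $\mu$ inside the open disk $|1-\mu| < 1$ (equivalently $|\mu|^2 < 2\,\mathrm{Re}(\mu)$), while the eigenvalue $0$ --- present precisely when $\Amat$ is singular --- is semisimple by Assumption~1 (diagonalizability of $\Bmat\Amat$). These are exactly the conditions for $M$ to be semi-convergent: $M^t$ converges to the spectral projector onto $\nulls(\Bmat\Amat)$ along $\col(\Bmat\Amat)$, so the affine iteration converges to some fixed point $\bar\wvec$. This is where I would cite the convergence results for singular linear systems, \citep{shi2011convergence,wang2013ontheconvergence}.

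It remains to show $\bar\wvec = \wvec^\star$. Two facts suffice. First, every increment $\wvec_{t+1} - \wvec_t = \Bmat(\bvec - \Amat\wvec_t)$ lies in $\row(\Amat)$: the $(\Smat\Amat)^\pinv\Smat(\cdot)$ part has range in $\col((\Smat\Amat)^\top) \subseteq \row(\Amat)$, and $\eta(\bvec - \Amat\wvec_t) \in \col(\Amat) = \row(\Amat)$, the last equality because a PSD matrix has $\nulls(\Amat) = \nulls(\Amat^\top)$. So initializing at $\wvec_0 = \zerovec$ keeps $\wvec_t \in \row(\Amat)$, hence $\bar\wvec \in \row(\Amat)$. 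Second, $z \defeq \Amat\bar\wvec - \bvec$ lies in $\col(\Amat) = \row(\Amat)$ and, since $\bar\wvec$ is a fixed point, satisfies $\Bmat z = 0$; rewriting this in terms of $u \defeq \Amat^\pinv z$ as $(\alpha\Pi + \eta\Amat)u = 0$ and pairing with $u$ (real) gives $\alpha\|\Pi u\|^2 + \eta\,u^\top\Amat u = 0$, which forces $\Pi u = 0$ and $\Amat u = 0$ (PSD), hence $z = \Amat u = 0$. So $\Amat\bar\wvec = \bvec$, and being in $\row(\Amat)$ it is the unique minimum-norm solution, $\bar\wvec = \Amat^\pinv\bvec$.

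The main obstacle is the spectral localization in the second paragraph: controlling the nonzero eigenvalues of the \emph{asymmetric} product $\Bmat\Amat$ with the explicit constants of Assumption~3, where both the non-symmetry of $\Amat$ (complex eigenvalues, nontrivial numerical range) and the competition between the $\alpha\Pi$ and $\eta\Amat$ terms must be handled; Assumption~1 is precisely what tames the degenerate eigenvalue at $0$. The remaining pieces --- rewriting the recursion, verifying the fixed point, and the subspace bookkeeping in the last paragraph --- are routine linear algebra.
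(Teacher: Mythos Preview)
Your plan is essentially the paper's own route: rewrite the expected update as the affine iteration $\wvec_{t+1}=(\eye-\Bmat\Amat)\wvec_t+\Bmat\bvec$, decompose $\Bmat\Amat=\alpha\,(\Smat\Amat)^\pinv(\Smat\Amat)+\eta\Amat$, localize its spectrum so that the nonzero eigenvalues sit inside $|1-\mu|<1$ while the zero eigenvalue is semisimple (Assumption~1), and then invoke the singular-system convergence results of \citep{shi2011convergence,wang2013ontheconvergence}. The paper verifies exactly the same three conditions (spectral bound, $\rank((\Bmat\Amat)^2)=\rank(\Bmat\Amat)$, and $\nulls(\Bmat\Amat)=\nulls(\Amat)$) and then defers to \cite[Theorem~1.1]{shi2011convergence}.

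The only substantive differences are in execution. For the spectral bound, the paper argues via Weyl's inequality that $\eigmax(\Bmat\Amat)\le \alpha+\eta\,\eigmax(\Amat)$ and concludes the eigenvalues of $\eye-\Bmat\Amat$ have modulus $<1$; your numerical-range argument $\mu=\alpha\,v^*\Pi v+\eta\,v^*\Amat v$ is more careful about the possible non-symmetry of $\Amat$, and you are right to flag it as the delicate step (Weyl's inequality as used in the paper tacitly treats $\Bmat\Amat$ as Hermitian). Second, for identifying the limit as $\Amat^\pinv\bvec$ the paper simply appeals to the cited theorem via the nullspace equality $\nulls(\Bmat\Amat)=\nulls(\Amat)$, whereas your last paragraph gives a self-contained argument using $\wvec_0=\zerovec$, the invariance $\wvec_t\in\row(\Amat)$, and the PSD identity $\nulls(\Amat)=\nulls(\Amat^\top)$; that extra bookkeeping is correct and is a nice complement to (not a departure from) the paper's proof.
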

\begin{proof} 
The expected updating rule is $\E_\pi[\Bmat \delta_t \evec_t] = \Bmat(\bvec - \Amat \wvec_t)$. 
As in the proof of convergence for ATD \cite[Theorem 1]{pan2017accelerated}, 
we similarly verify the conditions from \citep[Theorem 1.1]{shi2011convergence}. 
%We use a general result about stationary iterative methods which is applicable to the case where
%$\Api$ is not full rank.
%\citep[Theorem 1.1]{shi2011convergence} states that given a singular and consistent linear system $\Amat \wvec = \bvec$
%%
%where $\bvec$ is in the range of $\Amat$, the stationary iteration with $\wvec_0 \in \RR^\xdim$ for $t = 1, 2, \ldots$
%%
%\begin{align}
%\label{iterequ}
%\wvec_t = (\eye - \Bmat \Amat) \wvec_{t-1} + \Bmat \bvec
%\end{align}
%%
%converges to the solution $\wvec = \Amat^\pinv \bvec$ 
%if and only if the following three conditions are satisfied. 
%\begin{enumerate}[leftmargin=*]
%\item[] Condition {\bf I}: the  eigenvalues of $\eye - \Bmat \Amat$ are equal to 1 or have absolute value strictly less than 1. 
%\item[] Condition {\bf II}: $\text{rank}(\Bmat \Amat) = \text{rank} [(\Bmat \Amat)^2]$.
%\item[] Condition {\bf III}: nullspace$(\Bmat \Amat) = $ nullspace$(\Amat)$.
%\end{enumerate}
%We verify these conditions to prove the result. 
%First, because we use the projected Bellman error, $\bpi$ is in the range of
%$\Api$ and the system is consistent:
%there exists $\wvec$ s.t. $\Api \wvec = \bpi$. 

Notice first that \ \ \ \
%\begin{equation*}
$\Bmat \Amat = \stepsize (\Smat \Amat)^\pinv \Smat \Amat + \regwgt \Amat $.
%\end{equation*}
%

For singular value decomposition, $\Smat \Amat = \Umat \Sigmamat \Vmat^\top$, we have that 
$(\Smat \Amat)^\pinv \Smat \Amat = \Vmat \Sigmamat^\pinv \Umat^\top \Umat \Sigmamat \Vmat^\top
= \Vmat [\eye_{\tilde{\rdim}} \ \zerovec_{\xdim -\rdim}] \Vmat^\top$, where $\tilde{\rdim} \le \rdim$ is the rank of $\Smat\Amat$. The maximum eigenvalue of 
$(\Smat \Amat)^\pinv \Smat \Amat$ is therefore $1$. 

Because $(\Smat \Amat)^\pinv \Smat \Amat $ and
$\Amat$ are both positive semidefinite, $\Bmat \Amat$ is positive semi-definite. 
By Weyl's inequalities, 
\begin{equation*}
\eigmax(\Bmat \Amat) \le \stepsize\eigmax( (\Smat \Amat)^\pinv \Smat \Amat) + \regwgt \eigmax(\Amat)
.
\end{equation*}
Therefore, the eigenvalues of $\eye - \Bmat \Amat$ have absolute value strictly less than 1,
because $\regwgt \le (2\eigmax(\Amat))^\inv$ and $\stepsize < 1/2 = (2\eigmax( (\Smat \Amat)^\pinv \Smat \Amat))^\inv $
by assumption. 

For the second condition, since $\Bmat\Amat$ is PSD and diagonalizable, we
can write $\Bmat\Amat = \Qmat \Lambdamat \Qmat^\inv$ for some matrices $\Qmat$ and diagonal matrix $\Lambdamat$
with eigenvalues greater than or equal to zero. Then $(\Bmat\Amat)^2 = \Qmat \Lambdamat \Qmat^\inv\Qmat \Lambdamat \Qmat^\inv
= \Qmat \Lambdamat^2 \Qmat^\inv$ has the same rank. 

%\begin{align*}
%(\Bmat \Amat)^2 
%&= (\stepsize \Vmat [\eye_{\rdim} \ \zerovec_{\xdim -\rdim}] \Vmat^\top + \regwgt \Amat )^2\\
%&= \stepsize^2 \Vmat [\eye_{\rdim} \ \zerovec_{\xdim -\rdim}] \Vmat^\top \Vmat [\eye_{\rdim} \ \zerovec_{\xdim -\rdim}] \Vmat^\top + \stepsize \regwgt \Vmat [\eye_{\rdim} \ \zerovec_{\xdim -\rdim}] \Vmat^\top\Amat + \stepsize \regwgt \Amat \Vmat [\eye_{\rdim} \ \zerovec_{\xdim -\rdim}] \Vmat^\top + \regwgt^2 \Amat^2\\
%&= \stepsize^2 \Vmat [\eye_{\rdim} \ \zerovec_{\xdim -\rdim}] \Vmat^\top + \stepsize \regwgt \Vmat [\eye_{\rdim} \ \zerovec_{\xdim -\rdim}] \Vmat^\top\Amat + \stepsize \regwgt \Amat \Vmat [\eye_{\rdim} \ \zerovec_{\xdim -\rdim}] \Vmat^\top + \regwgt^2 \Amat^2\\
%\end{align*}

\newcommand{\nullspace}{\text{nullspace}}
For the third condition, because $\Bmat \Amat$ is the sum of two positive semi-definite matrices,
the nullspace of $\Bmat\Amat$ is a subset of the nullspace of each of those matrices individually:
$\nullspace(\Bmat\Amat) = \nullspace(\stepsize (\Smat \Amat)^\pinv \Smat \Amat + \regwgt \Amat ) \subseteq \regwgt \nullspace( \regwgt \Amat ) = \nullspace(\Amat)$. In the other direction, for all $\wvec$ such that $\Amat\wvec = \zerovec$, its clear
that $\Bmat \Amat \wvec = \zerovec$, and so $\nullspace(\Amat) \subseteq \nullspace(\Bmat\Amat)$.
Therefore, $\nullspace(\Amat) = \nullspace(\Bmat\Amat)$.
%
%For the second and third conditions, we simply need to show that $\Bmat$ is invertible and the proof follows
%through in the same way as for ATD.  TODO. 
\end{proof}

\begin{figure*}[t!]
	\vspace{-0.5cm}
	%\centering
	\subfigure[Mountain Car, RBF]{
		\includegraphics[width=\figwidththree]{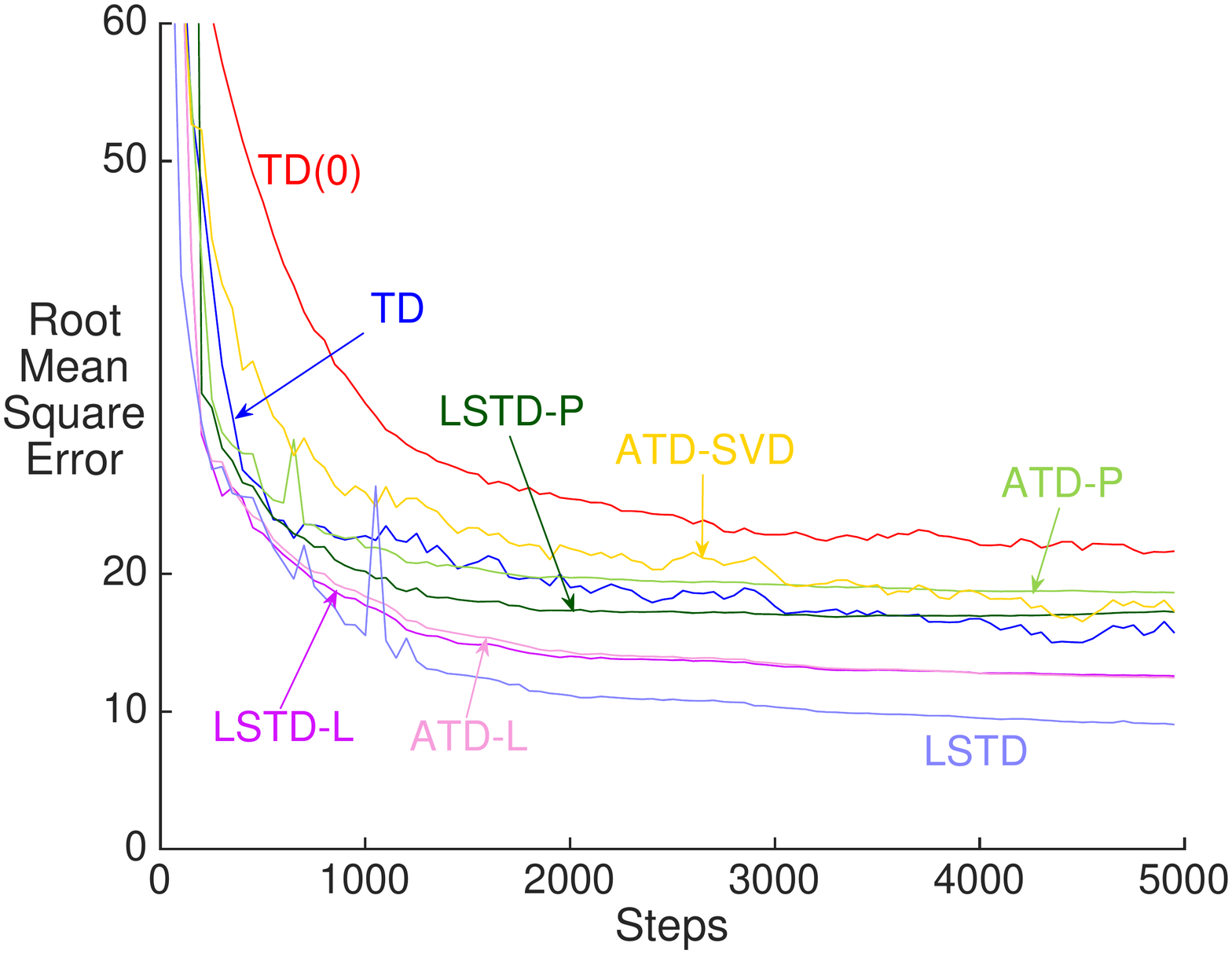} \label{fig:mcar_gau_rbf50r}} 
	\subfigure[Mountain Car, Tile coding]{
		\includegraphics[width=\figwidththree]{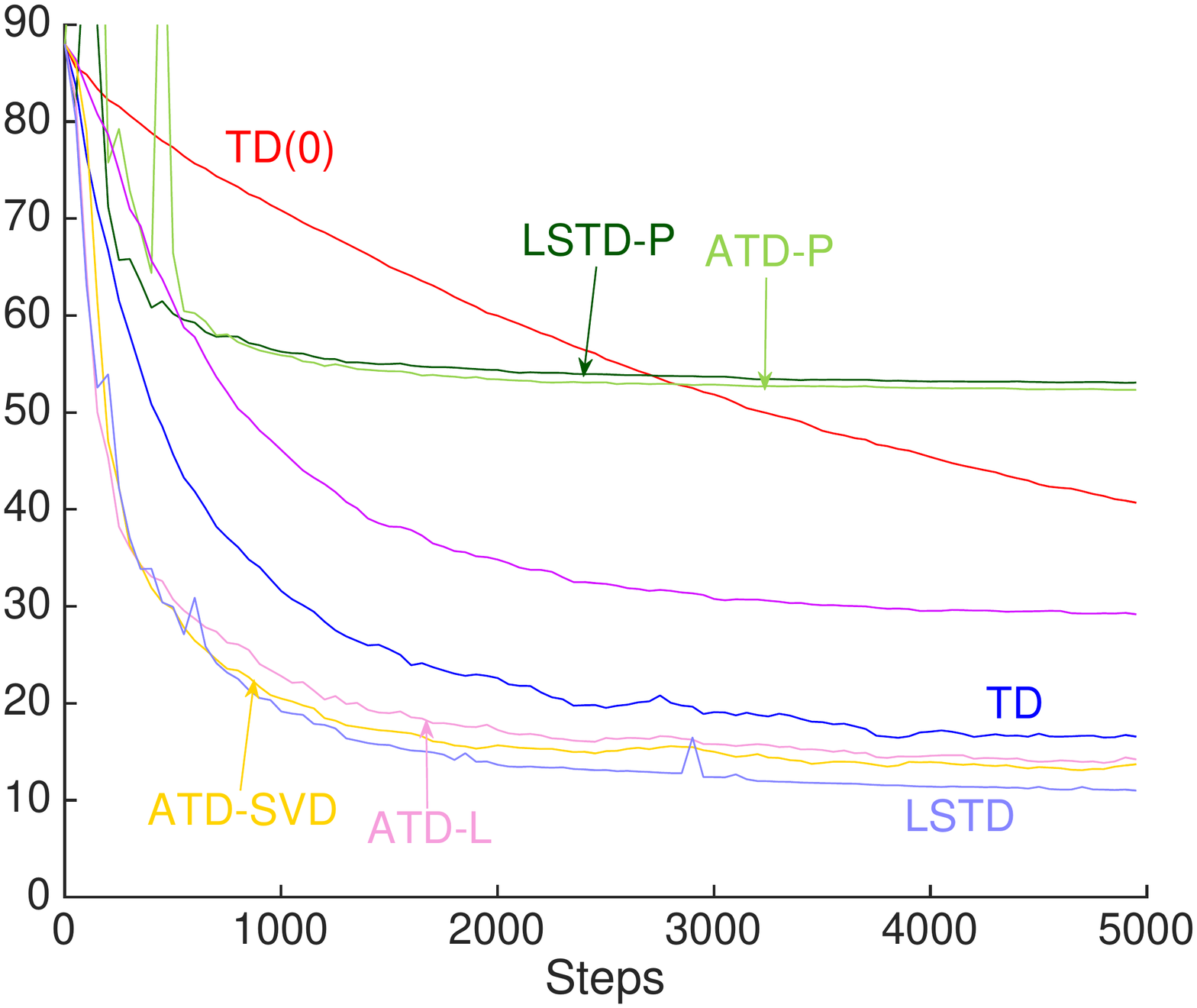} \label{fig:mcar_gau_tile50r}}\\
	\subfigure[Mountain Car, RBF, Sensitivity]{
		\includegraphics[width=\figwidththree]{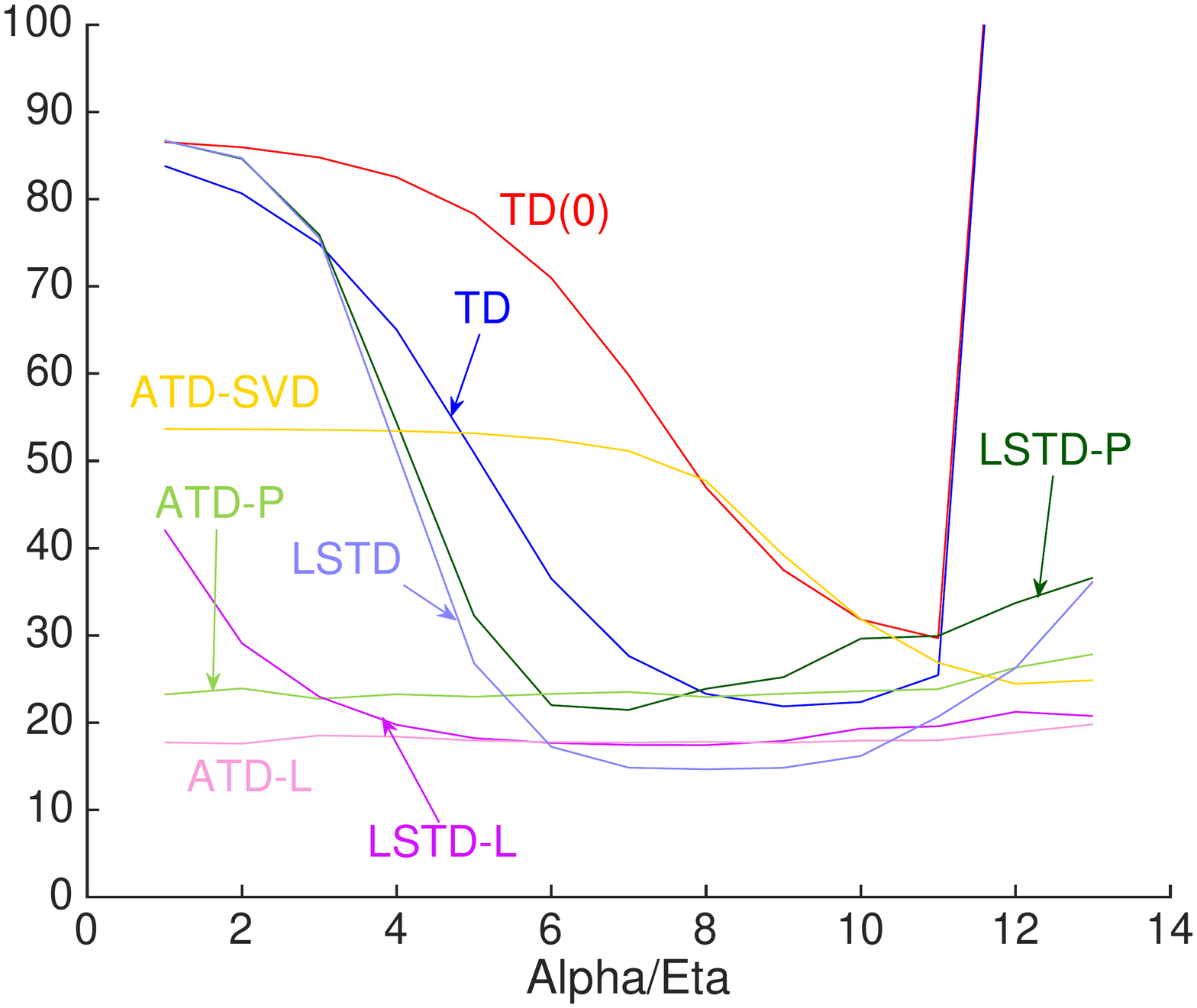}\label{fig:mcar_gau_rbf50rsensi}}
	\subfigure[Mountain Car, Tile coding, Sensitivity]{
		\includegraphics[width=\figwidththree]{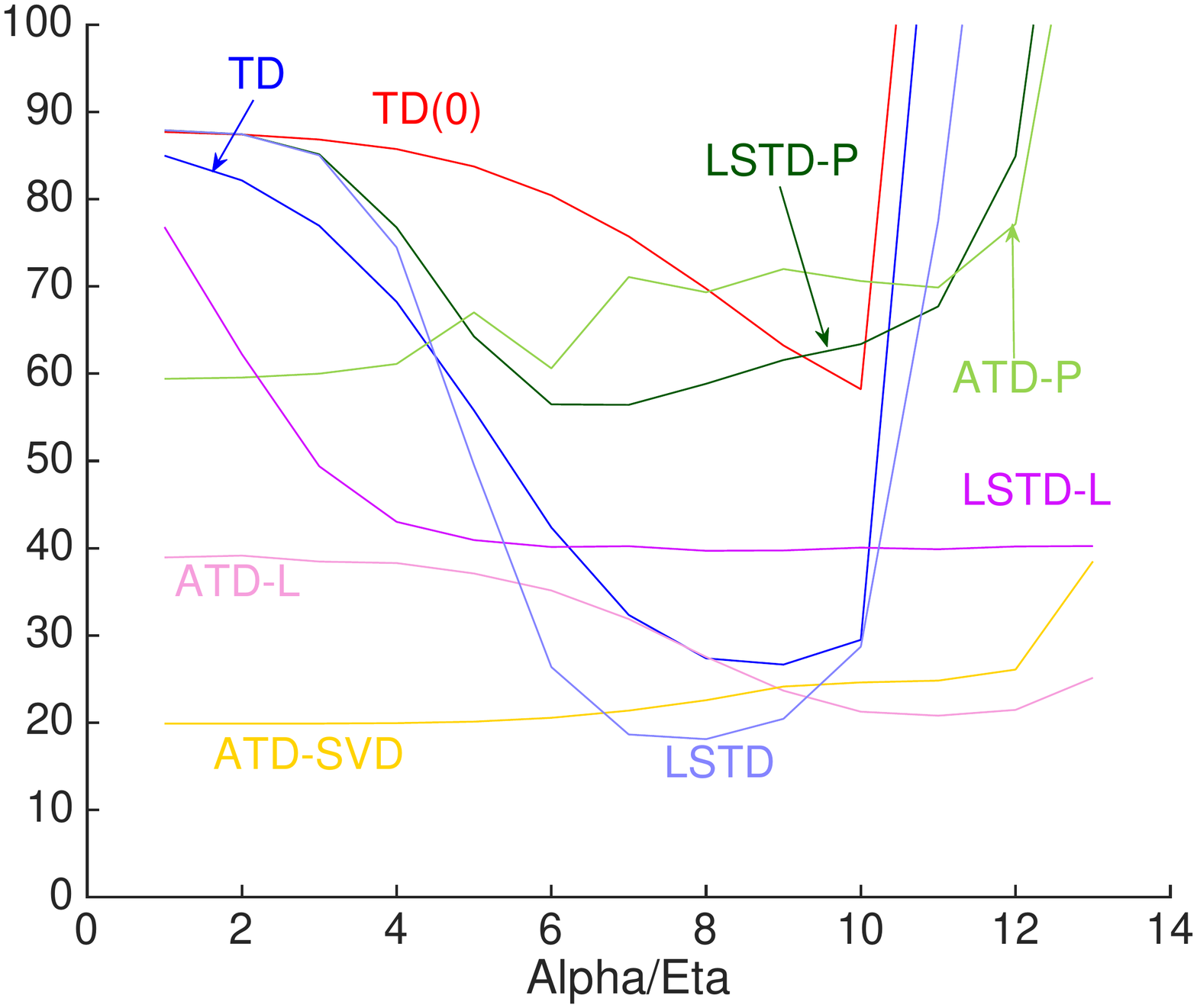}\label{fig:mcar_gau_tile50rsensi}}	
	\begin{minipage}{\figwidththree}
	\vspace{-8.0cm}
	\caption{ 
		%\small
		 \textbf{(a)} and \textbf{(b)} are learning curves on Mountain Car with $\rdim = 50$, and \textbf{(c)} and \textbf{(d)} are their corresponding parameter-sensitivity plots. The sensitivity plots report average RMSE over the entire learning curve, for the best $\lambda$ for each parameter. The stepsize $\stepsize$ is reported for TD, the initialization parameter $\xi$ for the LSTD methods
		and the regularization parameter $\eta$ for the ATD methods. The initialization for the matrices in the ATD methods is fixed to the identity.  The range for the regularization term $\eta$ is $0.1$ times the range for $\stepsize$. As before, the sketching approaches with RBFs perform better than with tile coding. The sensitivity of the left-side projection methods is significantly lower than the TD methods. ATD-L also seems to be less sensitive than ATD-SVD, and incurs less bias than LSTD-L. 
	}\label{fig:mcar_lc}
\end{minipage}
	%\vspace{-0.5cm}	
\end{figure*}

\section{\!\!WHEN SHOULD SKETCHING HELP?}

To investigate the properties of these sketching approaches, we need to understand
when we expect sketching to have the most benefit. 
Despite the wealth of literature on sketching and strong theoretical results, there seems to be fewer empirical investigations
into when sketching has most benefit. In this section, we elucidate some hypotheses
about when sketching should be most effective, which we then explore in our experiments. 

In the experiments for sketching the features in Section \ref{sec_issues},
it was clear that sketching the RBF features was much more effective than sketching the tile coding features. 
A natural investigation, therefore, is into the properties of representations that
are more amenable to sketching. The key differences between these two representations is in 
terms of smoothness, density and overlap. The tile coding representation has non-smooth 0,1 features,
which do not overlap in each grid. Rather, the overlap for tile coding results from
overlapping tilings.
% which overlap in some cases in the middle of each feature. 
This differs from RBF overlap, where centers are arranged in a grid and only edges of the RBF features overlap. The density of RBF features is significantly higher,
since more RBFs are active for each input. Theoretical work in sketching for regression \citep{maillard2012linear}, however, does not require features to be smooth. 
We empirically investigate these three properties---smoothness, density and overlap. 

There are also some theoretical results that suggest sketching could be more amenable for more distinct features---less overlap or potentially less tilings.
%There has been some investigation into sketching kernel matrices for classification \citep{balcan2006kernels}
%and sketching for regression \citep{maillard2012linear,fard2012compressed}.
%Discussion about properties of representation that makes sketching more likely to be successful.
\citet{balcan2006kernels} showed a worst-case setting where data-independent sketching results in poor performance.
%and a data-dependent projection is necessary. 
%They mention, however, that this worst-case setting is 
%not a typical choice, and for Gaussian or polynomial kernels, they cannot show such a result. Of particular interest
They propose a two-stage projection, to maintain separability in classification. 
The first stage uses a data-dependent projection, to ensure features are not highly correlated, and the second uses a data-independent projection (a sketch)
to further reduce the dimensionality after the orthogonal projection. The implied conclusion from this result
is that, if the features are not highly correlated, then the first step can be avoided and the data independent sketch 
should similarly maintain classification accuracy. 
%Though the setting here is different, 
This result suggests that sketching for feature expansions with less redundancy should
perform better. 
%We investigate levels of redundancy in the below experiments. 
%Below, we investigate the impact of smoothness, sparsity and redundancy
%on the efficacy of the sketch. 

We might also expect sketching to be more robust to the condition number of the matrix. 
For sketching in regression, \cite{fard2012compressed} found a bias-variance trade-off when increasing
$\rdim$, where for large $\rdim$, estimation error from a larger number of parameters became
a factor. Similarly, in our experiments above, LSTD using an incremental Sherman-Morrison update
has periodic spikes in the learning curve, indicating some instability. 
%Though sketching can lose information, due to projection, we might expect more
%stability from the projected matrix than the original if $\Amat$ is low-rank. 
The smallest eigenvalue of the
sketched matrix should be larger than that of the original matrix;
this improvement in condition number compensates for the loss in information.
Similarly, we might expect that maintaining an incremental singular value decomposition,
for ATD, could be less robust than ATD with left-side sketching.
%which only involves matrix multiplication. 

\begin{figure*}[htp!]
	\centering
	\subfigure[Puddle World, RBF, $\rdim = 25$]{
		\includegraphics[width=\figwidththree]{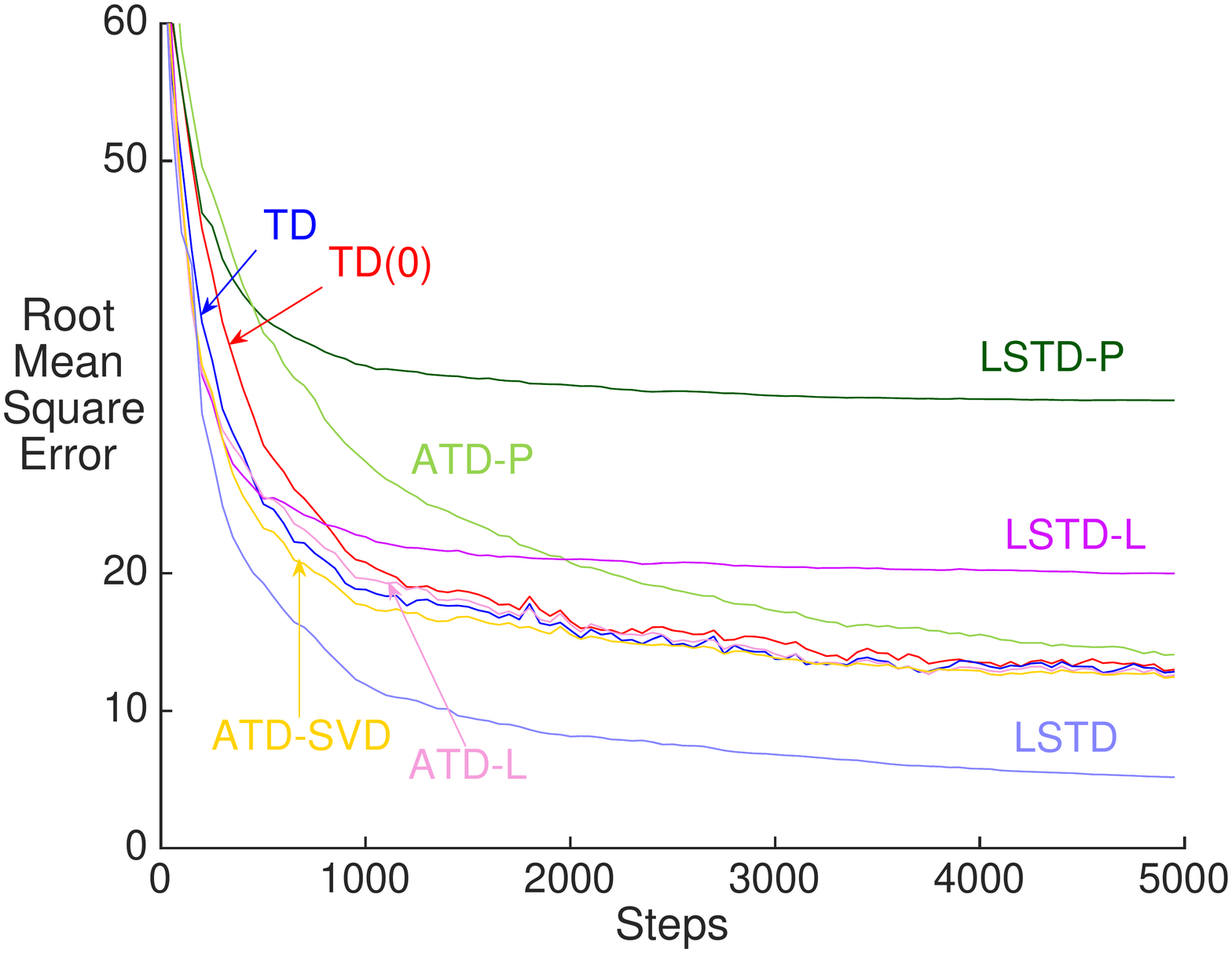} \label{fig:pd_gau_rbf25r}} 
	\subfigure[Puddle World, RBF, $\rdim = 50$]{
		\includegraphics[width=\figwidththree]{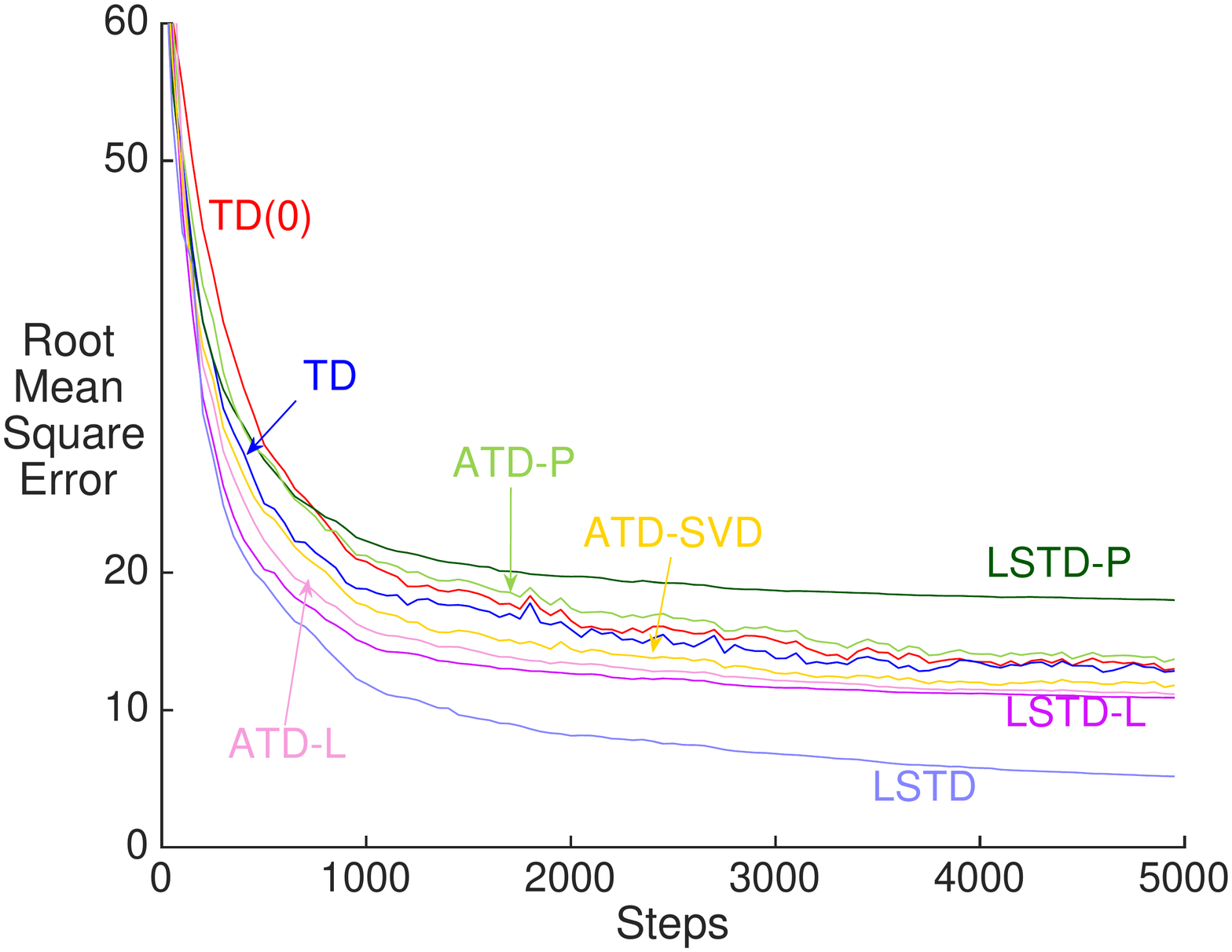} \label{fig:pd_gau_rbf50r}}
	\subfigure[Puddle World, RBF, $\rdim = 75$]{
		\includegraphics[width=\figwidththree]{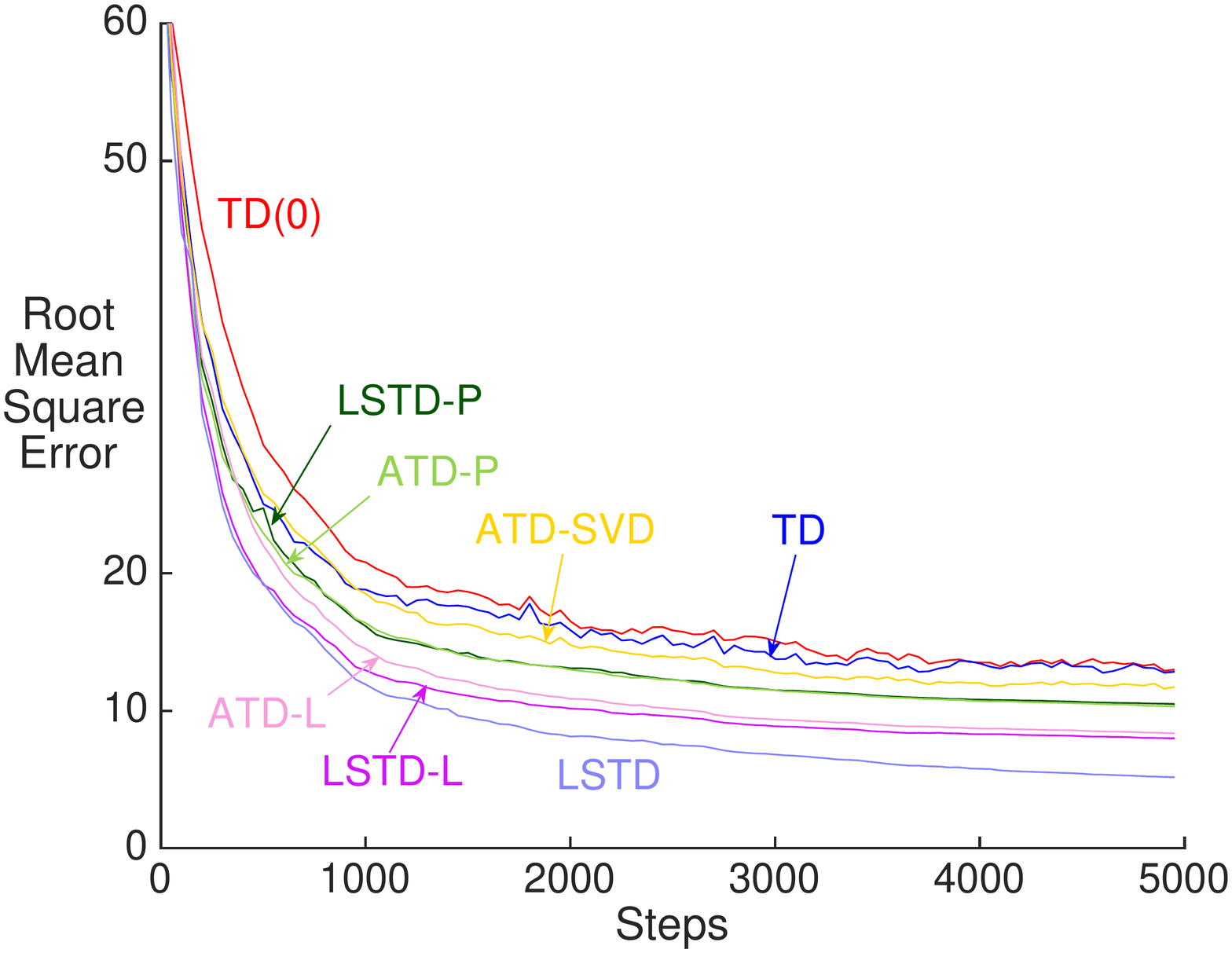}\label{fig:pd_gau_rbf75r}}
	%\begin{minipage}{\figwidththree}
	%	\vspace{-10.0cm}
		\caption{ 
			Change in performance when increasing $\rdim$, from $25$ to $75$.  Two-sided projection (i.e., projecting the features) significantly improves with larger $\rdim$, but is strictly dominated by left-side projection. At $\rdim = 50$, the left-side projection methods are outperforming TD
			and are less variant. ATD-SVD seems to gain less with increasing $\rdim$, though in general we found ATD-SVD to perform more poorly than ATD-P particularly for RBF representations.
		}\label{fig:pd_rank_lc}
	%\end{minipage}
	%\vspace{-0.5cm}	
\end{figure*}

\begin{figure*}[htp!]
\vspace{-0.5cm}
	%\centering
	\subfigure[RBF, $\rdim = 50$]{
		\includegraphics[width=\figwidthfour]{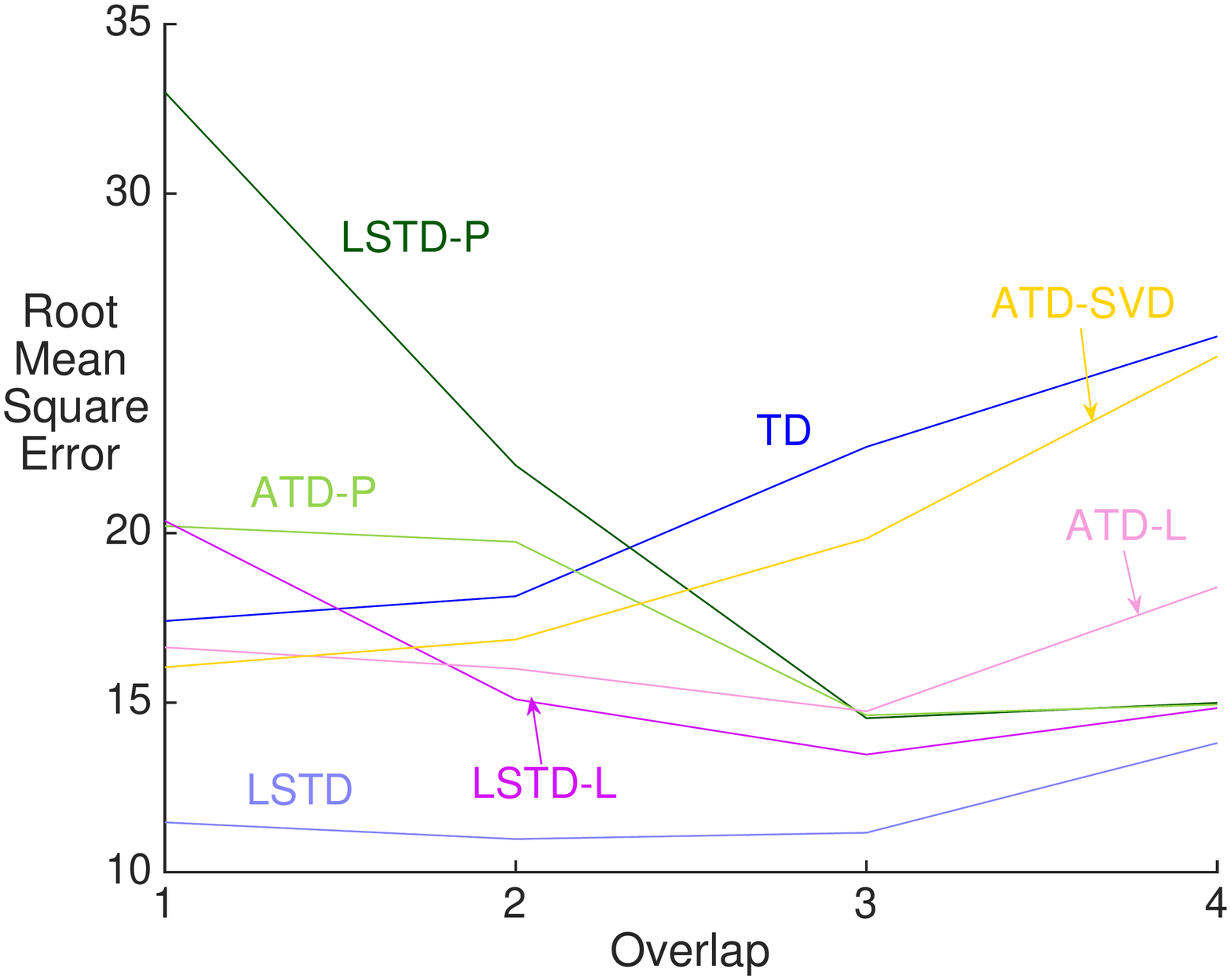} \label{fig:pd_dense_tile50r}}
	\subfigure[Spline, $\rdim = 50$]{
		\includegraphics[width=\figwidthfour]{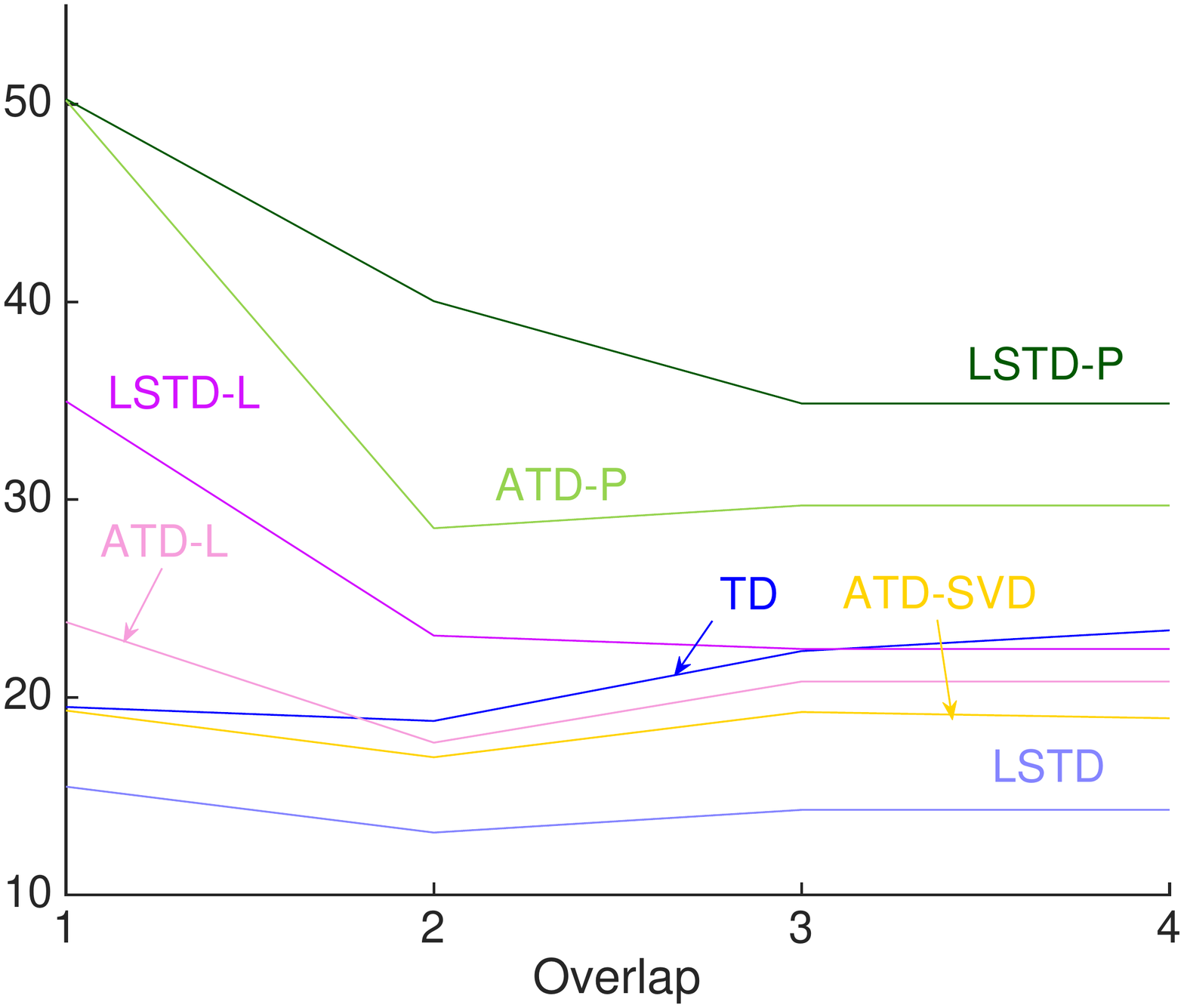} \label{fig:pd_dense_rbf50r}} 
	\subfigure[Tile coding, $\rdim = 50$]{
		\includegraphics[width=\figwidthfour]{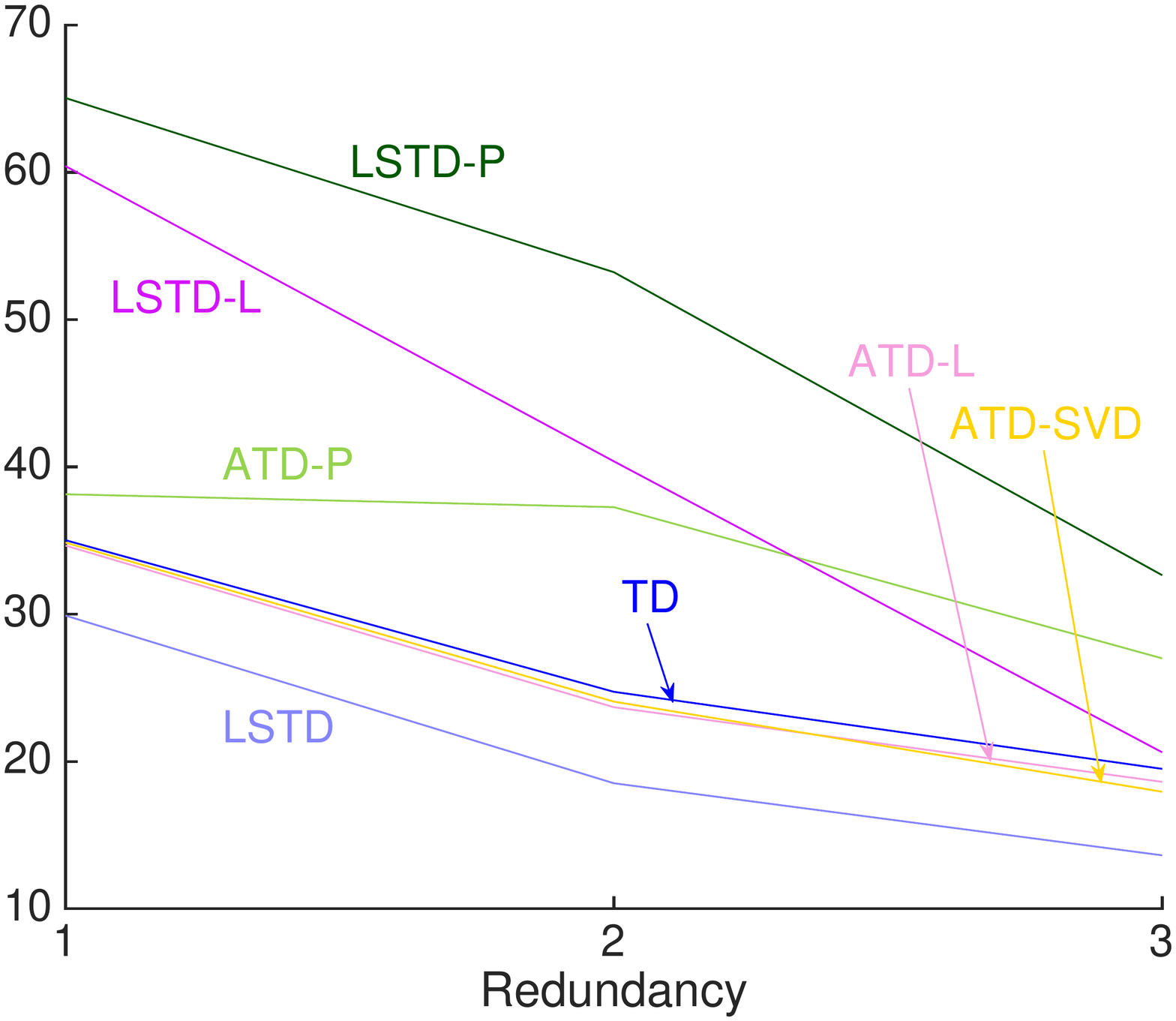} \label{fig:pd_tile_redund}}		
	\subfigure[RBFs with tilings, $\rdim = 50$]{
			\includegraphics[width=\figwidthfour]{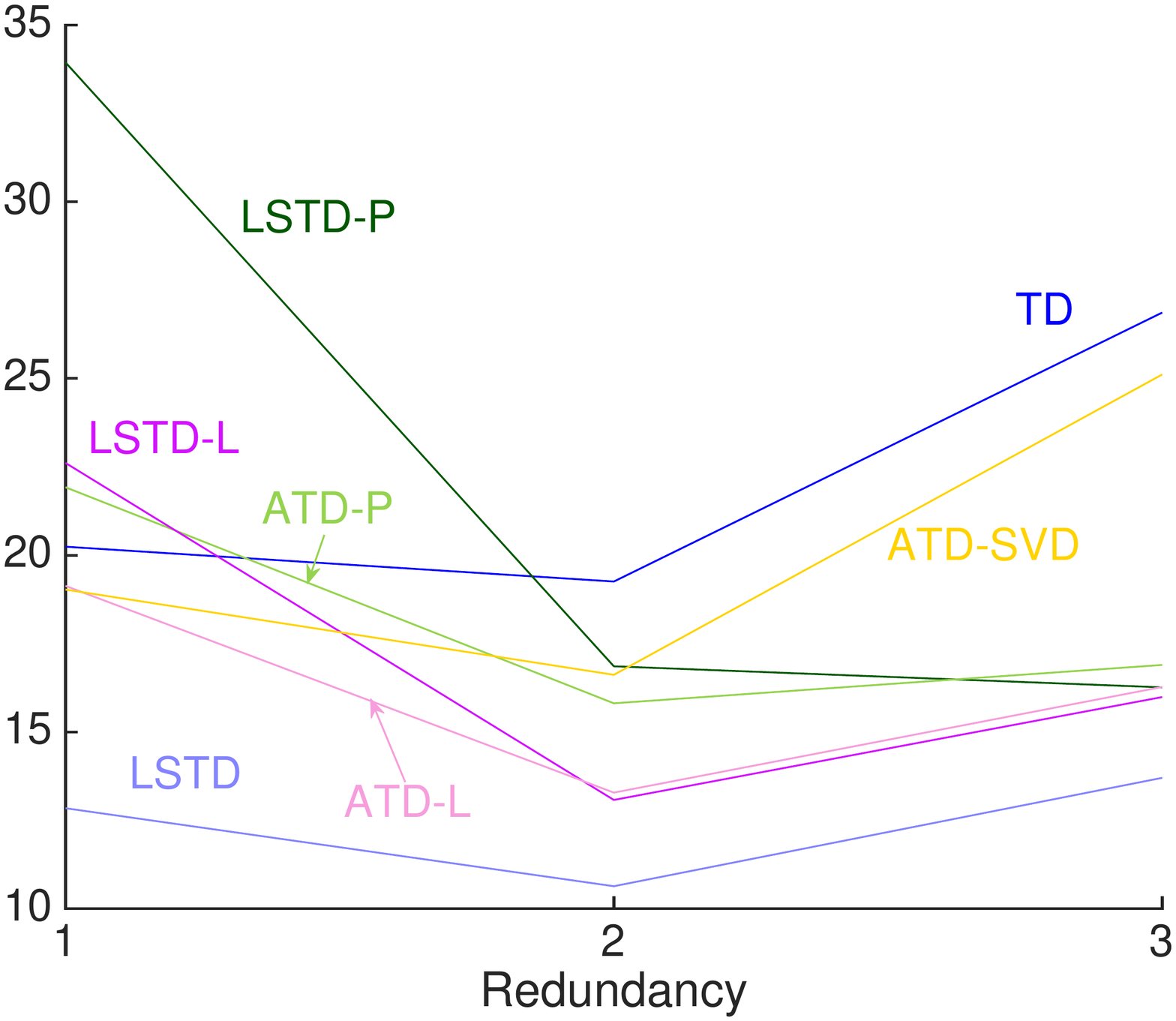} \label{fig:pd_tilerbf_redund}}
	\caption{ 
		%\small
		The effect of varying the representation properties, in Puddle World with $\xdim = 1024$. In \textbf{(a)} and \textbf{(b)}, we examine the impact of varying the overlap, for both smooth features (RBFs) and 0-1 features (Spline). For spline, the feature is 1 if $||\xvec - \cvec_i|| < \sigma$ and otherwise 0. The spline feature represents a bin, like for tile coding, but here we adjust the widths of the bins so that they can overlap and do not use tilings. 
		The x-axis has four width values, to give a corresponding feature vector norm of about $20, 40, 80, 120$. 
		%$\sigma = 0.0510, 0.0849, 0.1200, 0.1039$. 
		In \textbf{(c)} and \textbf{(d)}, we vary the redundancy, where number of tilings is increased and the total number of features kept constant. We generate tilings for RBFs like for tile coding, but for each grid cell use an RBF similarity rather than a spline similarity. We used $4 \times 16 \times 16, 16 \times 8 \times 8$ and  $64 \times 4 \times 4$. 
		%A rough conclusion from these figures is: random projection sketching prefers a relatively dense representation, since their performance is closer to LSTD as density increases. 
	}\label{fig:pd_overlap_redund}
	%\vspace{-0.5cm}	
\end{figure*}

\section{EXPERIMENTS}

In this section, we test the efficacy of sketching for LSTD and ATD in four domains: Mountain Car, Puddle World, Acrobot and Energy Allocation. We  set $\rdim = 50$, unless otherwise specified, average all results over 50 runs and sweep parameters for each algorithm. Detailed
experimental settings, such as parameter ranges, are in Appendix \ref{app_experiments}. To distinguish projections, we add -P for two-sided and -L for left-sided to the algorithm name.

We conclude that 1) two-sided projection---projecting the features---generally does much worse than only projecting the left-side of $\Amat$, 2) higher feature density is more amenable to sketching, particularly for two-sided sketching, 3) smoothness of features only seems to impact two-sided sketching, 4) ATD with sketching decreases bias relative to its LSTD variant and 5) ATD with left-sided sketching typically performs as well as ATD-SVD, but is significantly faster. 

%We then further showed that with one-side projection, we are able to incrementally maintain the pseudoinverse of the $\Amat$ as preconditioner matrix, achieving an unbiased solution even with a low projected dimension. We finally use higher dimensional domain to illustrate the practical usability of our sketching approach. Meanwhile, we also show its favorable efficiency comparing with the ATD paper which incrementally maintain an SVD to the preconditioner matrix. Given the similar performances of different sketching methods, we fixed using Gaussian random projection in all below experiments. Results are averaged over 50 runs,
%and parameters for each algorithms swept. Please refer to the appendix for a detailed experimental setting. 

\textbf{Performance and parameter sensitivity for RBFs and Tile coding.} We first more exhaustively compare the algorithms in Mountain Car and Puddle World, in Figures \ref{fig:mcar_lc} and \ref{fig:pd_rank_lc} with additional such results in the appendix. 
As has been previously observed, TD with a well-chosen stepsize can perform almost as well as LSTD in terms of sample efficiency,
but is quite sensitive to the stepsize. Here, therefore, we explore if our matrix-based learning algorithms can reduce this parameter sensitivity. In Figure \ref{fig:mcar_lc}, we can indeed see that this is the case. The LSTD algorithms look a bit more sensitive, because we sweep over small initialization values for completeness. For tile coding, the range is a bit more narrow, but for RBFs, in the slightly larger range, the LSTD algorithms are quite insensitive for RBFs. Interestingly, LSTD-L seems to be more robust.
%for tile coding, they do seem to be more sensitive, but LSTD-L seems to be much more robust. 
We hypothesize that the reason for this is that LSTD-L only has to initialize a smaller $\rdim\times\rdim$ symmetric matrix, $(\Smat \Amat (\Smat\Amat)^\top)^\inv = \eta \eye$, and so is much more robust to this initialization. In fact, across settings, we found initializing to $\eye$ was effective. Similarly, ATD-L benefits from this robustness, since it needs to initialize the same matrix, and then further overcomes bias using the approximation to $\Amat$ only for curvature information. 
%, we tested four widely used random projection approaches on two benchmark domains: mountain car and puddle world. First, we showed that the different random projection approaches perform very similar to each other. Second, in Figure~\ref{fig:mcar_lc} we compare the gaussian two-side random projection against gaussian one-side random projection, showing that one-side random projection can generally do better. It means that we should sketch the $\Amat$ matrix instead of sketching the representation matrix $\Phimat$ in linear value function approximation problem. We want to emphasize that our one-side projection technique suffers to low parameter sensitivity as shown in Figure Figure~\ref{fig:mcar_lc}, and we fixed the initialization as identity matrix across all experiments.

%We highlight that even with a low dimensional random projection, we can achieve unbiased solution when using the sketched matrix as preconditioner and applying regularization technique proposed by \citep{pan2017accelerated}. Figure~\ref{fig:pd_rank_lc} shows how performance of different algorithms change as rank increasing. Further more, with RBF representation, the sketching technique can significantly outperform TD algorithms and their performance is very close to LSTD. We should be aware that the sketching achieves almost linear time complexity, which is much more efficient than the singular value decomposition approach in \cite{pan2017accelerated}. 

\textbf{Impact of the feature properties.} 
We explored the feature properties---smoothness, density, overlap and redundancy---where we hypothesized sketching should help, shown in Figure~\ref{fig:pd_overlap_redund}. 
The general conclusions are 1) the two-side sketching methods improve---relative to LSTD---with increasing density (i.e., increasing overlap and increasing redundancy), 2) the smoothness of the features (RBF versus spline) seems to affect the two-side projection methods much more, 3) the shape of the left-side projection methods follows that of LSTD and 4) ATD-SVD appears to follow the shape of TD more. Increased density generally seemed to degrade TD, and so ATD-SVD similarly suffered more in these settings. In general, the ATD methods had less gain over their corresponding LSTD variants, with increasing density. 
%Likely more investigation into redundancy is required, to better understand how to incorporate redundancy for RBFs. For tile-coding, the increasing tilings (redundancy) uniformly improved all the methods. For RBFs, increasing tilings significantly degrades performance of TD.
%Since we observed that sketching is more beneficial on RBF representation, we further examined the learning performance on it. First, we consider how the performance changes when we increase the RBF width. Figure~\ref{fig:pd_overlap_redund} shows that the performance of those projection methods are closer to LSTD when we increase the width parameter. 

\textbf{Experiments on high dimensional domains.} We finally apply our sketching techniques on two high dimensional domains to illustrate practical usability: Acrobot and Energy allocation. The Acrobot domain \citep{sutton1998reinforcement} is a four dimensional episodic task, where the goal is to raise an arm to a certain height. 
The Energy allocation domain \cite{salas2013benchmarking} is a five-dimensional continuing task, where the goal is to store and allocate energy to maximize profit. For Acrobot, we used $14,400$ uniformly-spaced centers and for Energy allocation, we used the same tile coding of $8192$ features as \citet{pan2017accelerated}. We summarize the results in the caption of Figure \ref{fig:high_domain}, with the overall conclusion that ATD-L provides an attractive way to reduce parameter sensitivity of TD, and benefit from sketching to reduce computation. 

\begin{figure*}[htp!]
\vspace{-0.5cm}
	\centering
	\subfigure[Acrobot, RBF]{
		\includegraphics[width=\figwidththree]{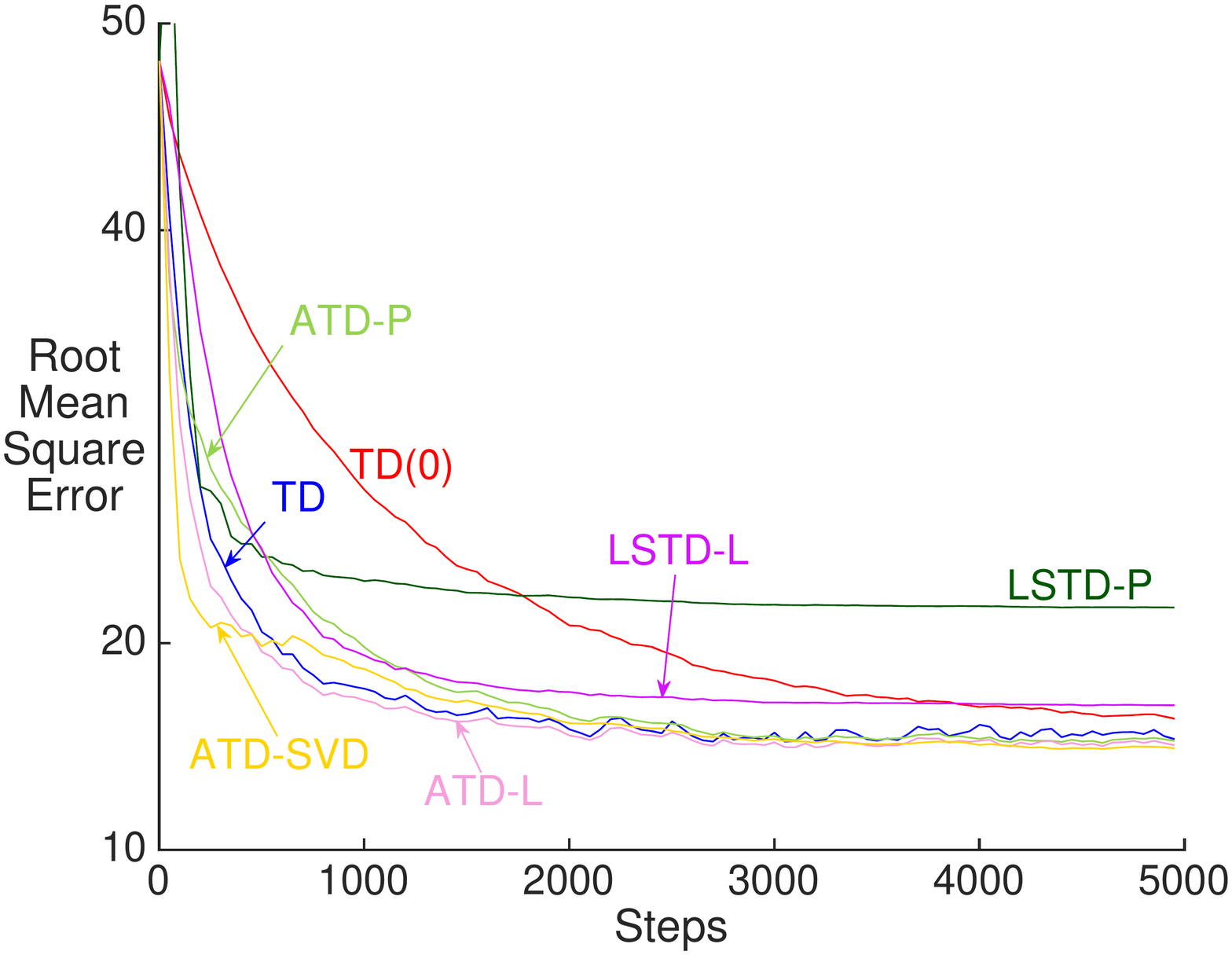}\label{fig:acro_rbf50r}}	
	\subfigure[Acrobot, RMSE vs Time]{
		\includegraphics[width=\figwidththree]{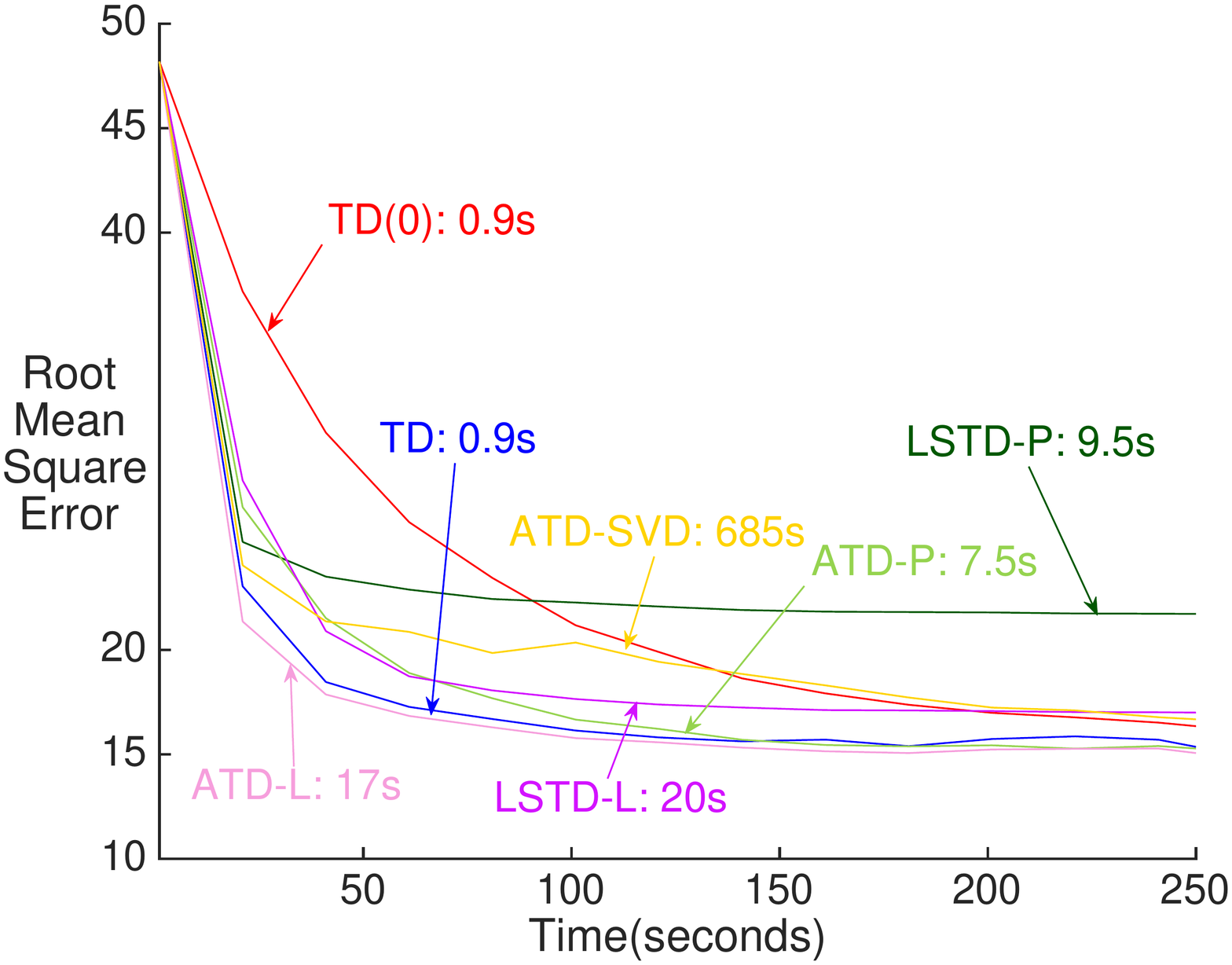}}
	\subfigure[Energy allocation, Tile coding]{
		\includegraphics[width=\figwidththree]{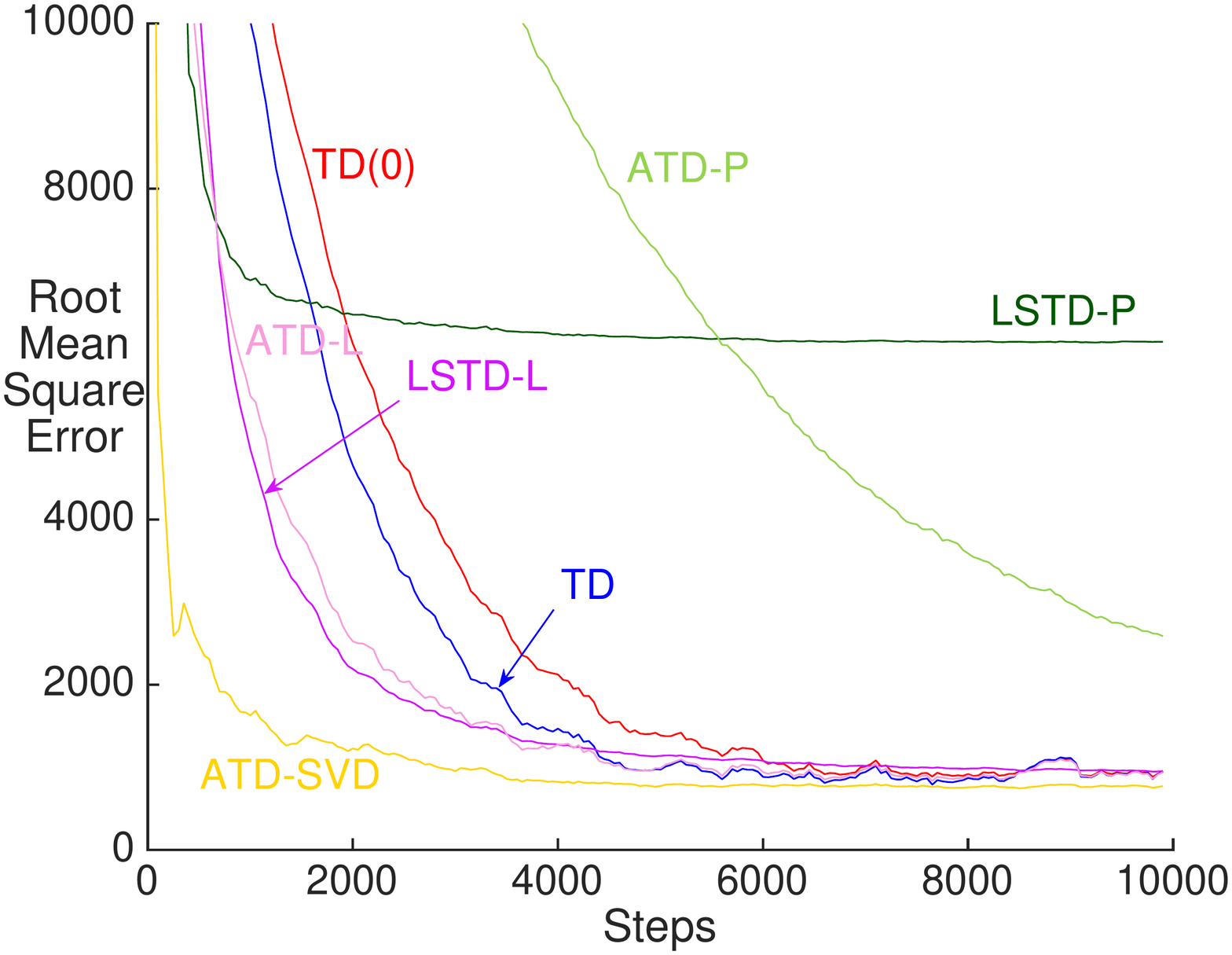}\label{fig:energy_tile50r}}
	%\vspace{-5.0cm}
	\caption{ 
		%\small
		Results in domains with high-dimensional features, using $\rdim = 50$ and with results averaged over $30$ runs. 
		For Acrobot, the (left-side) sketching methods perform well and are much less sensitive to parameters than TD. For runtime, we show RMSE versus time allowing the algorithms to process up to $25$ samples per second, to simulate a real-time setting learning; slow algorithms cannot process all $25$ within a second. With computation taken into account, ATD-L has a bigger win over ATD-SVD, and does not lose relative to TD. Total runtime in seconds for one run for each algorithm is labeled in the plot. ATD-SVD is much slower, because of the incremental SVD. 
		%{'TD(0)','TD','PLSTD','ATD-Sketch-SM','Pone-Acc','Pone-ATD','ATD2nd'};
%0.9000    0.9000    9.5000    7.5000   20.0000   17.0000  685.0000
		% MARTHAC: This would need to be explained more, I'm commenting it out for now. We do need to say it at some point, or maybe in the appendix.
		%Energy domain is optimized over the last half of steps to avoid blowing up at the tail for TD algorithms. 
		For the Energy Allocation domain, the two-side projection methods (LSTD-P, ATD-P) are significantly worse than other algorithms. Interestingly, here ATD-SVD has a bigger advantage, likely because sketching the tile coding features is less effective. 
	}\label{fig:high_domain}
	%\vspace{-0.5cm}	
\end{figure*}

\section{CONCLUSION AND DISCUSSION}

In this work, we investigated how to benefit from sketching approaches
for incremental policy evaluation. We first showed that sketching features can have significant bias issues,
and proposed to instead sketch the linear system, enabling better control over how much information is lost. 
We highlighted that sketching for radial basis features seems to be much more effective, than for tile coding,
and further that a variety of natural asymmetric sketching approaches for sketching the linear system are not effective.
We then showed that more carefully using sketching---particularly with left-side sketching within a quasi-Newton update---enables us to obtain an unbiased approach that can improve sample efficiency without incurring significant computation.  
Our goal in this work was to provide practical methods that can benefit from sketching, 
and start a focus on empirically investigating settings in which sketching is effective. 

%\section{Related work}
Sketching has been used for quasi-Newton updates in online learning; a natural
question is if those methods are applicable for policy evaluation.
\citet{luo2016efficient} consider sketching approaches for an online Newton-update, for general functions
rather than just the linear function approximation case we consider here.
They similarly have to consider updates amenable to incrementally approximating a matrix (a Hessian in their case).
In general, however, porting these quasi-Newton updates to policy evaluation for reinforcement learning is problematic
for two reasons. First, the objective function for temporal difference learning is the mean-squared projected Bellman error,
which is the product of three expectations. It is not straightforward to obtain an unbiased sample of this gradient, which is why \citet{pan2017accelerated} propose a slightly different quasi-Newton update that uses $\Amat$ as a preconditioner. Consequently, it is
not straightforward to apply quasi-Newton online algorithms that assume access to unbiased gradients. 
Second, the Hessian can be nicely approximated in terms of gradients, and is symmetric; both are exploited when deriving the sketched online Newton-update \citep{luo2016efficient}. We, on the other hand, have an asymmetric matrix $\Amat$. 

In the other direction, we could consider if our approach could be beneficial for the online regression setting. 
%Though derived for policy evaluation, a similar approach can be used for linear regression.
For linear regression, with $\gamma =0$, the matrix $\Amat$ actually corresponds to the Hessian. 
%One can maintain $\Amat = \Xmat^\top \Xmat$ and $\bvec = \Xmat^\top \yvec$ 
%for linear regression. 
In contrast to previous approaches that sketched the features \citep{maillard2012linear,fard2012compressed,luo2016efficient}, 
therefore, one could instead sketch the system 
and maintain $(\Smat \Amat)^\pinv$. Since the second-order update is $\Amat^\inv \gvec_t$ for gradient $\gvec_t$
on iteration $t$, an approximate second-order update could be computed as $((\Smat\Amat)^\pinv\Smat + \eta \eye) \gvec_t$. 

In our experiments, we found sketching both sides of $\Amat$ to be less effective and found little benefit from modifying the chosen
sketch; however, these empirical conclusions warrant further investigation. With more understanding into
the properties of $\Amat$, it could be possible to benefit from this variety. For example,
sketching the left-side of $\Amat$ could be seen as sketching the eligibility trace, and the right-side as sketching
the difference between successive features. For some settings, there could be properties of either of these vectors
that are particularly suited to a certain sketch. As another example, the key benefit of many of the sketches over Gaussian random projections is in enabling the dimension $\rdim$ to be larger, by using (sparse) sketching matrices where dot product are efficient. We could not easily benefit from these properties, because $\Smat \Amat$ could be dense and computing matrix-vector products and incremental inverses would be expensive for larger $\rdim$. For sparse $\Amat$, or when $\Smat\Amat$ has specialized properties,
% that make these matrix-vector product and inverses efficient, 
it could be more possible to benefit from different sketches. 

Finally, the idea of sketching fits well into a larger theme of random representations
within reinforcement learning. A seminal paper on random representations \citep{sutton1993online} demonstrates
the utility of random threshold units, as opposed to more carefully learned units. Though end-to-end
training has become more popular in recent years, there is evidence that random representations
can be quite powerful \citep{aubry2002random,rahimi2007random,rahimi2008uniform,maillard2012linear}, or even combined with descent strategies \citep{mahmood2013representation}. 
For reinforcement learning,
this learning paradigm is particularly suitable, because data cannot be observed upfront. 
Data-independent representations, such as random representations and sketching approaches,
are therefore particularly appealing and warrant further investigation for the incremental
learning setting within reinforcement learning. 

%There is a general difficulty in porting the wealth of literature on solving linear systems. 
%We considered other specialized solvers to enable solutions to the system $\Amat \wvec = \bvec$,
%including SPSD solvers \citep{wang2015apractical}. However, these methods assume access to the system, rather
%than incremental computation. Adapting them to the incremental case is rarely straightforward,
%and these algorithms are not designed to benefit from an incremental update.  
%More can be done to understand the properties of $\Amat$,
%so that less curvature information is lost in the projection. 
%Discussion about the fact that we may be able to draw more 
%on literature about matrix properties. 
%For example, if $\Amat$ has strong spectral features, i.e., large singular values,
%there are some interesting theoretical results \citep{achlioptas2007fast}.

\section*{ACKNOWLEDGEMENTS}
This research was supported by NSF CRII-RI-1566186. 

{\small
\bibliography{biblio}
\bibliographystyle{icml2016}
}

\newpage
\appendix
%!TEX root = paper.tex

\section{Row-rank properties of $\Smat\Amat$}\label{app_theory}
To ensure the right pseudo-inverse is well-defined in Section \ref{sec_left}, we show that the projected matrix $\Smat\Amat$ is full row-rank with high probability, if $\Amat$ has sufficiently high rank.
%
%It is obvious that 
We know that the probability measure of row-rank deficient matrices for $\Smat$ has zero mass. 
%That is theoretically enough for what is needed in Section \ref{sec_left}. 
However in the following, we prove a stronger and practically more useful claim that $\Smat\Amat$ is far from being row-rank deficient. Formally, we define a matrix to be \emph{$\delta$-full row-rank} if there is no row that can be replaced by another row with distance at most $\delta$ to make that matrix row-rank deficient.

\begin{proposition}\label{lem_rowrank}
Let $\Smat \in \RR^{\rdim \times \xdim}$ be any Gaussian matrix with $0$ mean and unit variance. For $r_A=rank(\Amat)$ and 
for any $\delta > 0$, $\Smat\Amat$ is $\delta$-full row-rank with probability at least $1-\exp(-2\frac{(r_A(1-0.8\delta)-\rdim)^2}{r_A})$.
\end{proposition}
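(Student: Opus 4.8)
The plan is to reduce the statement to a lower bound on the smallest singular value of a Gaussian matrix extracted from the row space of $\Amat$, and then prove that bound by a greedy exposure argument combined with Hoeffding's inequality. First I would unpack the definition: $\Smat\Amat$ fails to be $\delta$-full-row-rank exactly when some row can be replaced, by a vector at distance at most $\delta$, so as to make the rows linearly dependent; writing the rows of $\Smat\Amat$ as $\rho_i$ and noting that a perturbation of row $i$ corresponds to a coefficient vector $\cvec$ with $\cvec_i$ the relevant coordinate, this is equivalent to $\min_{\|\cvec\|_\infty = 1}\|(\Smat\Amat)^\top\cvec\|_2 \le \delta$. Since $\|\cvec\|_\infty = 1$ forces $\|\cvec\|_2 \ge 1$, it suffices to show $\sigma_{\min}(\Smat\Amat) > \delta$ with the stated probability. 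Using the (thin) SVD $\Amat = \Umat_{r_A}\Sigmamat_{r_A}\Vmat_{r_A}^\top$ with $r_A = \rank(\Amat)$, we have $\Smat\Amat = (\Smat\Umat_{r_A})\Sigmamat_{r_A}\Vmat_{r_A}^\top$; because $\Vmat_{r_A}^\top$ has orthonormal rows this preserves singular values, and $\Sigmamat_{r_A}^2 \succeq \sigma_{\min}(\Sigmamat_{r_A})^2\eye$, so $\sigma_{\min}(\Smat\Amat) \ge \sigma_{\min}(\Sigmamat_{r_A})\,\sigma_{\min}(G)$ where $G \defeq \Smat\Umat_{r_A}\in\RR^{\rdim\times r_A}$ has i.i.d.\ $\normal(0,1)$ entries (its columns are $\Smat\uvec_k$ with $\uvec_k$ orthonormal, hence i.i.d.\ $\normal(\zerovec,\eye_\rdim)$). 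Thus, up to the benign factor $\sigma_{\min}(\Sigmamat_{r_A})$ (which I will take to be at least $1$, or else replace $\delta$ by $\delta/\sigma_{\min}(\Sigmamat_{r_A})$), everything reduces to showing that $\PP[\sigma_{\min}(G)\le\delta]$ is at most the claimed bound, for $G$ an $\rdim\times r_A$ i.i.d.\ standard Gaussian matrix with $\rdim < r_A$.

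For this core estimate I would expose the $r_A$ columns $\gvec_1,\dots,\gvec_{r_A}$ of $G$ one at a time, maintaining a subspace $U_k\subseteq\RR^\rdim$ spanned by the columns accepted so far, with $U_0 = \{\zerovec\}$. At step $k$, if $\dim U_{k-1} < \rdim$, accept $\gvec_k$ when $\mathrm{dist}(\gvec_k,U_{k-1}) > \delta$, setting $U_k = U_{k-1}\oplus\mathrm{span}(\gvec_k)$ and letting $Z_k$ be the indicator of acceptance; once $\dim U_{k-1} = \rdim$, set $Z_k = 1$. The key one-dimensional fact is Gaussian anticoncentration: conditionally on $U_{k-1}$ with $\dim U_{k-1}^\perp\ge 1$, the quantity $\mathrm{dist}(\gvec_k,U_{k-1})$ is the Euclidean norm of a standard Gaussian in dimension $\ge 1$, which stochastically dominates $|\normal(0,1)|$, so $\PP[Z_k = 0\mid U_{k-1}] \le \PP[|\normal(0,1)|\le\delta] \le \sqrt{2/\pi}\,\delta \le 0.8\,\delta$. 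If at least $\rdim$ of the $Z_k$ equal $1$ then $\dim U_{r_A} = \rdim$ and the $\rdim$ accepted columns are ``robustly independent,'' from which one argues $\sigma_{\min}(G) > \delta$; hence the failure event is contained in $\{\sum_{k=1}^{r_A}Z_k \le \rdim - 1\}$.

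Finally, since each $Z_k$ conditionally on the past stochastically dominates a $\mathrm{Bernoulli}(1-0.8\delta)$ variable, $\sum_k Z_k$ dominates $\mathrm{Bin}(r_A,1-0.8\delta)$, whose mean is $r_A(1-0.8\delta)$. Hoeffding's inequality for the binomial lower tail then gives $\PP[\mathrm{Bin}(r_A,1-0.8\delta)\le\rdim-1] \le \exp\big(-2(r_A(1-0.8\delta)-\rdim+1)^2/r_A\big) \le \exp\big(-2(r_A(1-0.8\delta)-\rdim)^2/r_A\big)$, which is exactly the claimed bound; equivalently one can apply the Azuma--Hoeffding inequality directly to the partial sums of $Z_k - \EE[Z_k\mid\text{past}]$, avoiding an explicit coupling.

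I expect the main obstacle to be the step that turns ``$\rdim$ accepted columns'' into the precise conclusion $\sigma_{\min}(G) > \delta$ (equivalently, the clean $\delta$-full-row-rank statement via $\|\cdot\|_\infty$). Having each accepted column merely $\delta$-far from the span of the previously accepted ones controls the Gram determinant but not, on its own, the smallest eigenvalue, so one must bring in the typical near-orthogonality and norm concentration of Gaussian columns, or strengthen the per-step threshold by a constant factor (which would only affect the constant $0.8$, not the form $\exp(-2t^2/r_A)$). A secondary technical point is justifying the Hoeffding step despite the dependence among the $Z_k$ — handled by the stochastic-domination/martingale framing — and keeping honest track of whether the nonzero singular values of $\Amat$ genuinely drop out or must appear as a rescaling of $\delta$.
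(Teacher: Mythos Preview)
Your proposal follows essentially the same route as the paper: an SVD reduction to a pure Gaussian matrix, a greedy exposure argument in which each new vector is $\delta$-far from the span of the earlier ones with probability at least $1-0.8\delta$ (via the one-dimensional Gaussian anticoncentration bound $\PP[|\normal(0,1)|\le\delta]\le\sqrt{2/\pi}\,\delta\le0.8\delta$), and then Hoeffding over $r_A$ trials requiring $\rdim$ successes. The gap you flag---that ``each accepted vector is $\delta$-far from the previously accepted span'' does not by itself yield $\sigma_{\min}>\delta$ or the exact $\delta$-full-row-rank conclusion---is real, but it is equally present (and unacknowledged) in the paper's own proof; indeed your bookkeeping is cleaner, since you iterate over the $r_A$ columns of $G\in\RR^{\rdim\times r_A}$, which is consistent with the $r_A$ trials in the Hoeffding step, whereas the paper speaks of iterating over the rows of $\Smat'\in\RR^{\rdim\times r_A}$ yet still runs the trial up to $r_A$ iterations. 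You are also right to note the $\sigma_{\min}(\Sigmamat_{r_A})$ scaling, which the paper elides.
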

\begin{proof}
Let $\Amat=\Umat\Sigmamat \Vmat^\top$ be the SVD for $\Amat$. Since $\Umat$ is an orthonormal matrix, $\Smat'=\Smat\Umat$ has the same distribution as $\Smat$ and the rank of $\Smat \Amat$ is the same as $\Smat' \Sigmamat$. Moreover notice that the last $d-r_A$ columns of $\Smat'$ get multiplied by all-zero rows of $\Sigmamat$.
Therefore, in what follows, we assume we draw a random matrix $\Smat' \in \RR^{\rdim \times r_A}$(similar to how $\Smat$ is drawn), and that $\Sigmamat \in \RR^{r_A \times r_A}$ is a full rank diagonal matrix. We study the rank of $\Smat' \Sigmamat$.

%First, let's consider the basis where the columns of $\Amat$ look like identity matrix except last $\xdim-r_A$ columns that are all zero. After this change of basis the claim reduces to considering first $r_A$ columns of $\Smat$ and ignoring the rest since they  get multiplied by all-zero rows of $\Amat$. Therefore w.l.o.g., suppose $\Smat \in \RR^{\rdim \times r_A}$, $\Amat \in \RR^{r_A \times r_A}$ and $\Amat$ is full rank.

%Let $\Phi(x)$ denote the probability of a Gaussian random variable taking a value from $[0,x]$, for any $x>0$. 

Consider iterating over the rows of $\Smat'$, the probability that any new row is $\delta$-far from being a linear combination of the previous ones is at least $1-0.8\delta$. To see why,
assume that you currently have $i$ rows and sample another vector $\vvec$ with entries sampled i.i.d. from a standard Gaussian
as the candidate for the next row in $\Smat'$. 
% that is orthogonal to the subspace spanned by the current rows.
The length corresponding to the projection of any row $\Smat_{j:}'$ onto $\vvec$, i.e., $\Smat_{j:}' \vvec \in \RR$, is a Gaussian random variable.
Thus, the probability of the $\Smat_{j:}' \vvec$ being within $\delta$
%and so being almost spanned by the rows, 
is at most $0.8\delta$.
This follows from the fact that the area under probability density function of a standard Gaussian random variable over $[0,x]$ is at most $0.4x$, for any $x>0$. 

This stochastic process is a Bernoulli trial with success probability of at least $1-0.8\delta$. The trial stops when there are $\rdim$ successes or when the number of iterations reaches $r_A$. 
%Hoeffding inequality \cite{hoeffding1963probability} bounds the probability of failure by $\exp(-2\frac{(r_A(1-0.8\delta)-\rdim)^2}{r_A})$.
The Hoeffding inequality bounds the probability of failure by $\exp(-2\frac{(r_A(1-0.8\delta)-\rdim)^2}{r_A})$.
\end{proof}

%
%
%\begin{proposition}\label{lem_rowrank}
%%For $\delta > 0$ where $\rdim = \log \delta$, with probability $1-\delta$,
%%$\Smat \Amat$ is full row-rank. 
%Assume $\Amat$ is full rank.
%Let $\Smat \in \RR^{\rdim \times \xdim}$ be any subgaussian matrix with $0$ mean and unit variance. For $\forall \delta \in [0, \epsilon(\sqrt{\xdim} - \sqrt{\rdim - 1})]$, the rank of $(\Smat \Amat)^\top \in \RR^{\xdim \times \rdim}$, in numerical sense, specified by number of singular values greater than $\delta$, satisfy $\PP(rank((\Smat \Amat)^\top) = k) \ge 1 - (C \epsilon)^{\xdim - \rdim +1} - c^\xdim$, where $C$ is a constant only related to fourth moment of the distribution, and $c \in (0,1)$.
%\end{proposition}
%
%\begin{proof}
%From the theorem 3.3 in \cite{mark2010esv} (or \cite{mark2008invertrand} for original result), we know immediately $\PP(rank(\Smat) = k) \ge 1 - (C \epsilon)^{\rdim +1} - c^\rdim$.
%Then apply Sylvester's rank inequality: $rank(\Smat) + rank(\Amat) - \xdim \le rank (\Smat \Amat)$; meanwhile $rank(\Smat \Amat) \le \min(\xdim, \rdim) = \rdim$, hence $rank(\Smat \Amat) = \rdim$. Since the randomness only comes from $\Smat$, the conclusion holds. %This lemma essentially claims that the projected matrix is full rank with high probability.
%\end{proof}

\section{Alternative iterative updates}

In addition to the proposed iterative algorithm using a left-sided sketch of $\Amat$, 
we experimented with a variety of alternative updates that proved ineffective.
We list them here for completeness.

We experimented with a variety of iterative updates. For a linear system, $\Amat \wvec = \bvec$,
one can iteratively update using $\wvec_{t+1} = \wvec_t + \stepsize (\bvec - \Amat \wvec_t)$
and $\wvec_t$ will converge to a solution of the system (under some conditions). We tested the following ways to use sketched linear systems.
 
\textbf{First}, for the two-sided sketched $\Amat$, we want to solve for $\Smat_L \Amat \Smat_R^\top \wvec = \Smat_L \bvec$.
If $\tilde{\Amat}_t = \Smat_L \Amat_t \Smat_R^\top$ is square, we can use the iterative update
\begin{align*}
\tilde{\Amat}_{t+1} &= \tilde{\Amat}_t + \frac{1}{t+1}\left(\Smat_L\evec_t(\Smat_R \dvec_t)^\top - \tilde{\Amat}_t \right) \\
\tilde{\bvec}_{t+1} &= \tilde{\bvec}_t + \frac{1}{t+1}\left(r_{t+1} \Smat_L \evec_t - \tilde{\bvec}_t \right)\\
\wvec_{t+1} &= \wvec_t + \stepsize_t (\tilde{\bvec}_{t+1} -  \tilde{\Amat}_{t+1} \wvec_t)\\
&= \wvec_t + \stepsize_t (\Smat_L\bvec_{t+1} -  \Smat_L \Amat_{t+1} \Smat_R^\top \wvec_t)
\end{align*}
and use $\wvec$ for prediction on the sketched features.
Another option is to maintain the inverse incrementally, using Sherman-Morrison 
\begin{align*}
\avec_d &= \dvec_t^\top \Smat_R ^\top \tilde{\Amat}_t^\inv \\
\avec_u &=  \tilde{\Amat}_t^\inv \Smat_L \evec_t\\
\tilde{\Amat}_t^\inv &= \tilde{\Amat}_t^\inv - \frac{\avec_u \avec_d}{1+ \dvec_t^\top \avec_u} \\
\tilde{\bvec}_t &= \tilde{\bvec}_t + \frac{r_{t+1} \Smat_L \evec_t - \tilde{\bvec}_t }{t}\\
\wvec &= \tilde{\Amat}_t^\inv \tilde{\bvec}_t 
\end{align*}

If $\Smat_L \Amat \Smat_R^\top$ is not square (e.g., $\Smat_R = \eye$), we instead
solve for $\Smat_L^\top\Smat_L \Amat \Smat_R^\top\Smat_R \wvec = \Smat_L^\top\Smat_L \bvec$, where applying $\Smat_L^\top$ provides the recovery from the left and $\Smat_R$ the recovery from the right. 

\textbf{Second}, with the same sketching, we also experimented with $\Smat_L^\pinv$, instead of $\Smat_L^\top$ for the recovery, and similarly for $\Smat_R$, but this
provided no improvement. 

For this square system, the iterative update is
\begin{align*}
\wvec_{t+1} &= \wvec_t + \stepsize_t \Smat_L^\pinv (\tilde{\bvec}_{t+1} -  \tilde{\Amat}_{t+1} \Smat_R \wvec_t)\\
&= \wvec_{t+1} + \stepsize_t \Smat_L^\pinv (\Smat_L \bvec_{t+1} -  \Smat_L \Amat_{t+1} \Smat_R^\top \Smat_R \wvec_t)
\end{align*}
for the same $\tilde{\bvec}_t$ and $\tilde{\Amat}_t$ which can be efficiently kept incrementally, while the pseudoinverse of $\Smat_L$ only needs to be computed once at the beginning. 
 
\textbf{Third}, we tried to solve the system $\Smat_L^\top\Smat_L \Amat \wvec = \bvec$, using the updating rule $\wvec_{t+1} = \wvec_t + \stepsize_t (\bvec_{t+1} -  \Smat_L^\top \Smat_L \Amat_{t+1} \wvec_t)$, where the matrix $\Smat_L \Amat_{t+1}$ can be incrementally maintained at each step by using a simple rank-one update.  

\textbf{Fourth}, we tried to explicitly regularize these iterative updates by adding a small step in the direction of $\delta_t \evec_t$.

In general, none of these iterative methods performed well. We hypothesize this may be due to difficulties in choosing stepsize parameters. Ultimately, we found the sketched updated within ATD to be the most effective.

\section{Experimental details}\label{app_experiments}

Mountain Car is a classical episodic task with the goal of driving the car to the top of mountain. The state is 2-dimensional, consisting of the (position, velocity) of the car. We used the specification from \citep{sutton1998reinforcement}. We compute the true values of $2000$ states, where each testing state is sampled from a trajectory generated by the given policy. From each test state, we estimate the value---the expected return---by computing the average over 1000 returns, generated by rollouts. The policy for Mountain Car is the energy pumping policy with $20\%$ randomness starting from slightly random initial states. The discount rate is 1.0, and is 0 at the end of the episode, and the reward is always $-1$. 

Puddle World \cite{boyan1995generalization} is an episodic task, where the goal is for a robot in a continuous gridworld to reach a goal state within as fewest steps as possible. The state is 2-dimensional, consisting of ($x,y$) positions. We use the same setting as described in \citep{sutton1998reinforcement}, with a discount of 1.0 and -1 per step, except when going through a puddle that gives higher magnitude negative reward. We compute the true values from $2000$ states in the same way as Mountain Car.
%We compute the true values of $2000$ states and get training trajectories by uniformly sampling initial states from the state space $[0,1] \times [0,1]$. To make the testing states' values accurate, we still sample $1000$ rollouts for computing each of them. 
A simple heuristic policy choosing the action leading to shortest Euclidean distance with $10\%$ randomness is used.

Acrobot is a four-dimensional episodic task, where the goal is to raise an arm to certain level. The reward is $-1$ for non-terminal states and $0$ for goal state, again with discount set to 1.0. We use the same tile coding as described in \citep{sutton1998reinforcement}, except that we use memory size $2^{15} = 32,768$. To get a reasonable policy, we used true-online Sarsa($\lambda$) to go through $15000$ episodes with stepsize $\alpha = 0.1/48$ and bootstrap parameter $\lambda = 0.9$. Each episode starts with a slight randomness. The policy is $\epsilon-$greedy with respect to state value and $\epsilon = 0.05$. The way we compute true values and generate training trajectories are the same as we described for the above two domains. 

Energy allocation \citep{salas2013benchmarking} is a continuing task with a five-dimensional state, where we use the same settings as in \cite{pan2017accelerated}. 
%It is a originally four-dimensional domain, converted to five dimensions for the considered policy evaluation task. 
The matrix $\Amat$ was shown to have a low-rank structure \citep{pan2017accelerated} and hence matrix approximation methods are expected to perform well. 

% MARTHAC: this is not precisely true, as we adjusted the tile coding and RBFs. Since we do not explain RBFs in the text, lets just assume they know what it is
%\paragraph{Tile coding and RBFs for Mountain Car and Puddle World.}  We use exactly same tile coding setting on Mountain Car and Puddle World, which is $10$ tilings with $10 \times 10$ tiles for each, providing $1024$ memory size. Tile coding setting on acrobot and energy allocation are exactly the same as the references mentioned above. 
%
%As for radial basis function, we use the format $k(\xvec, \cvec) = \exp(-\frac{||\xvec - \cvec||^2}{2\sigma^2})$ where $\sigma$ is called RBF width. 
For radial basis functions, we used format $k(\xvec, \cvec) = \exp(-\frac{||\xvec - \cvec||_2^2}{2\sigma^2})$ where $\sigma$ is called RBF width and $\cvec$ is a feature. On Mountain Car, because the position and velocity have different ranges, we set the bandwidth separately for each feature using $k(\xvec, \cvec) = \exp(-((\frac{\xvec_1 - \cvec_1}{0.12r_1})^2 + (\frac{\xvec_2 - \cvec_2}{0.12r_2})^2))$, where $r_1$ is the range of the first state variable and $r_2$ is the range of second state variable. 
%We found this setting works very well on Mountain Car. For other domains, we use the standard formulae. 

In Figure~\ref{fig:pd_overlap_redund}, we used a relatively rarely used representation which we call spline feature. For sample $\xvec$, the $i$th spline feature is set to 1 if $||\xvec - \cvec_i|| < \delta$ and otherwise set as 0. 
The centers are selected in exactly the same way as for the RBFs. 
%Where $\cvec_i$s are centers picked uniformly from state space. 

\paragraph{Parameter optimization.} We swept the following ranges for stepsize ($\alpha$), bootstrap parameter ($\lambda$), regularization parameter ($\eta_t$), and initialization parameter $\xi$ for all domains: 
\begin{enumerate}[nolistsep] 
	\item $\alpha \in \{0.1 \times 2.0^j | j = -7,-6, ...,4,5\}$ divided by $l_1$ norm of feature representation, 13 values in total.
	\item $\lambda \in \{0.0, 0.1, ..., 0.9, 0.93,0.95,0.97,0.99, 1.0\}$, 15 values in total.
	\item $\eta \in \{0.01 \times 2.0^j | j = -7,-6, ...,4,5\}$ divided by $l_1$ norm of feature representation, 13 values in total.
	\item $\xi \in \{10^j | j = -5, -4.25, -3.5, ..., 2.5, 3.25, 4.0\}$, 13 values in total. 
\end{enumerate}
To choose the best parameter setting for each algorithm, we used the sum of RMSE across all steps for all the domains Energy allocation. For this domain, optimizing based on the whole range causes TD to pick an aggressive step-size to improve early learning at the expense of later learning. Therefore, for Energy allocation, we instead select the best parameters based on the sum of the RMSE for the second half of the steps.
% except on the energy domain where the TDs will blow up at the tail. We optimize learning curve on energy allocation domain by using only the sum of last half steps. 

For the ATD algorithms, as done in the original paper, we set $\alpha_t = \frac{1}{t}$ and only swept the regularization parameter $\eta$,
which can also be thought of a (smaller) final step-size. For this reason, the range of $\eta$ is set to $0.1$ times the range of $\alpha$,
to adjust this final stepsize range to an order of magnitude lower. 

\paragraph{Additional experimental results.} 
In the main paper, we demonstrated a subset of the results to highlight conclusions. For example, we showed the learning curves and parameter sensitivity in Mountain Car, for RBFs and tile coding. Due to space, we did not show the corresponding results for Puddle World in the main paper; we include these experiments here. Similarly, we only showed Acrobot with RBFs in the main text, and include results with tile coding here.

\begin{figure*}[htp!]
	%\centering
		\subfigure[Puddle World, tile coding,  $\rdim = 50$]{
			\includegraphics[width=\figwidththree]{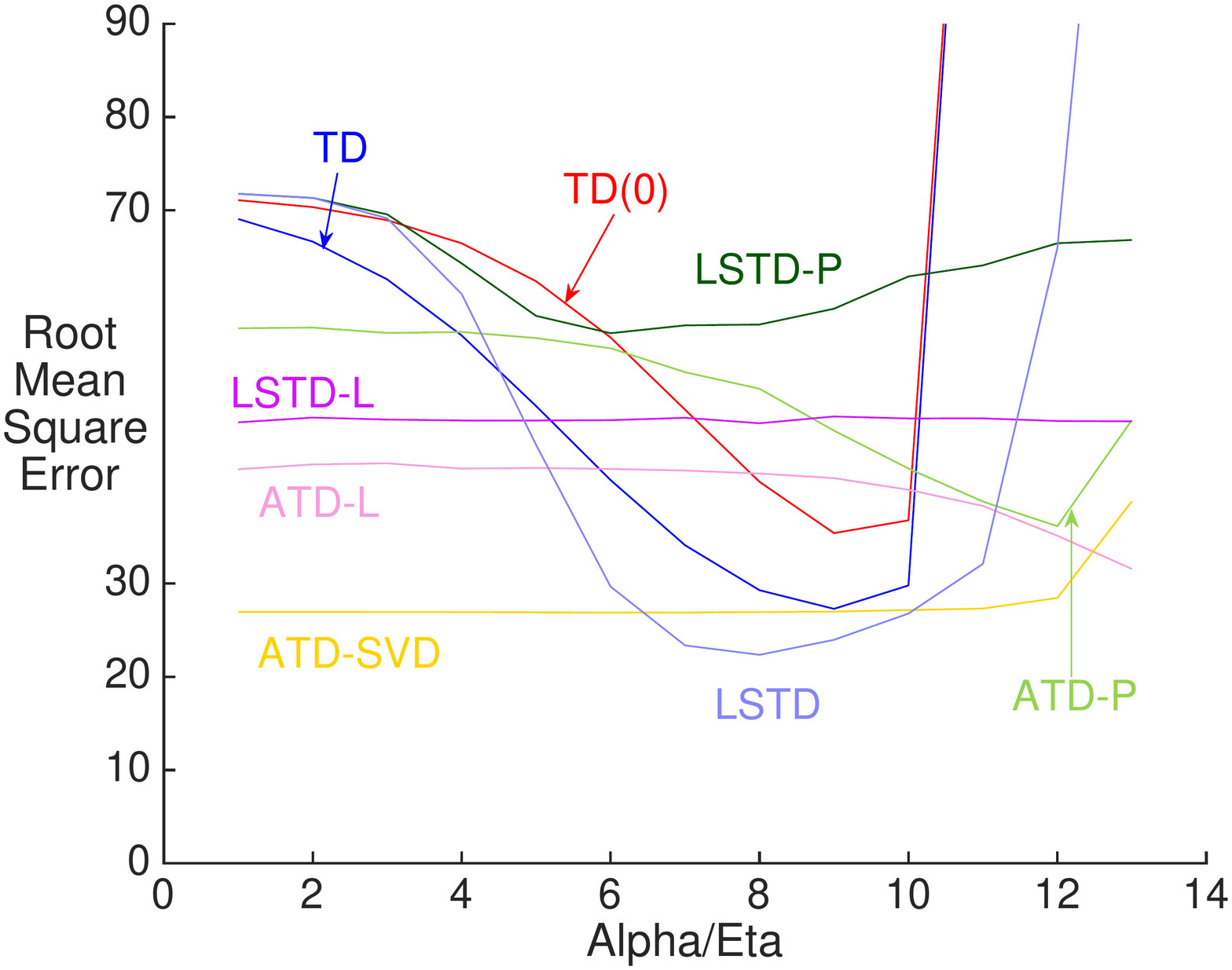} \label{fig:pd_gau_tile50r}}
		\subfigure[Puddle World, RBF, $\rdim = 50$]{
			\includegraphics[width=\figwidththree]{figures/pd_gau_rbf50r.pdf}} \\
	\subfigure[Puddle World, tile coding, $\rdim = 50$]{
		\includegraphics[width=\figwidththree]{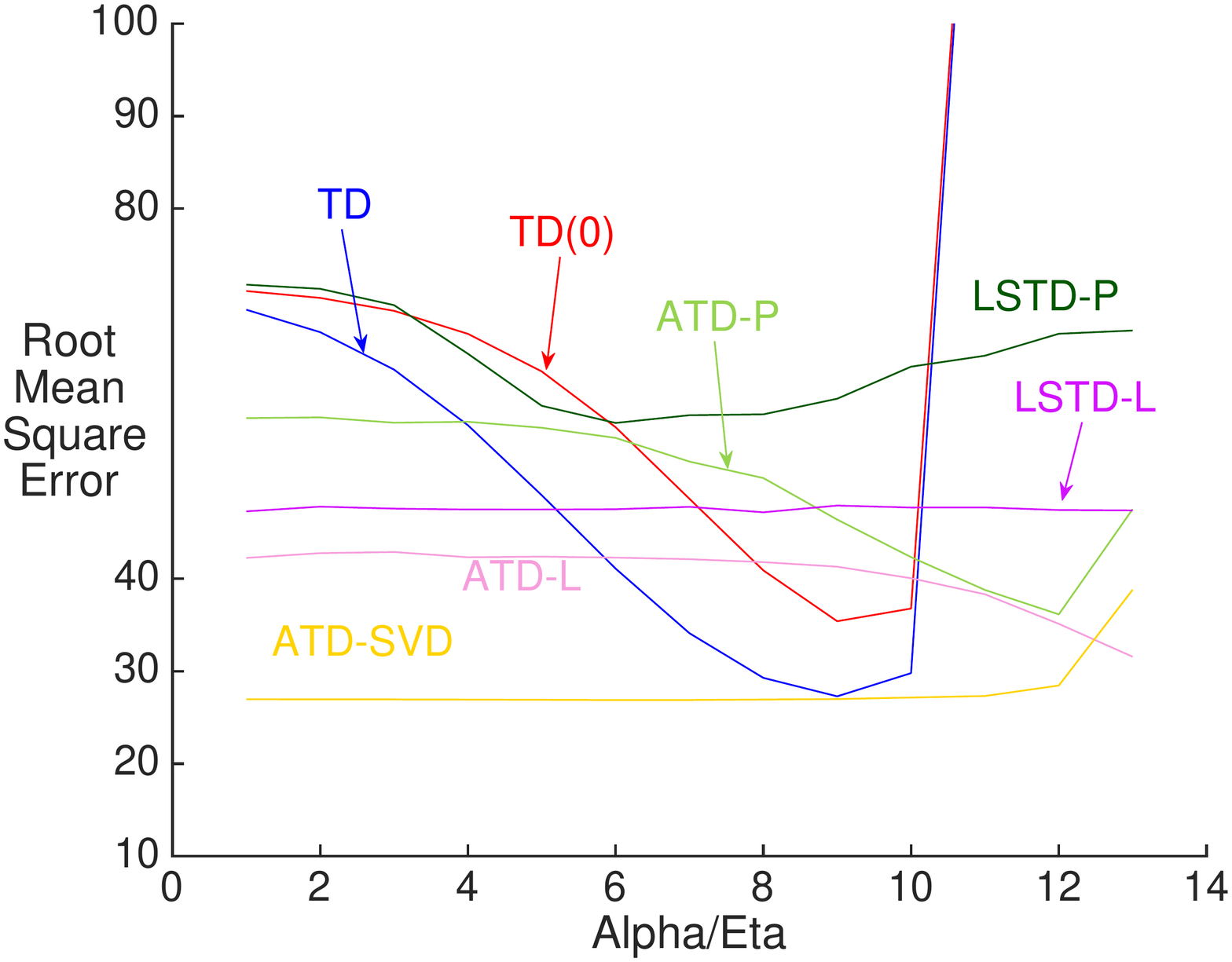}\label{fig:pd_gau_tile50rsensi}}	
	\subfigure[Puddle World, RBF, $\rdim = 50$]{
		\includegraphics[width=\figwidththree]{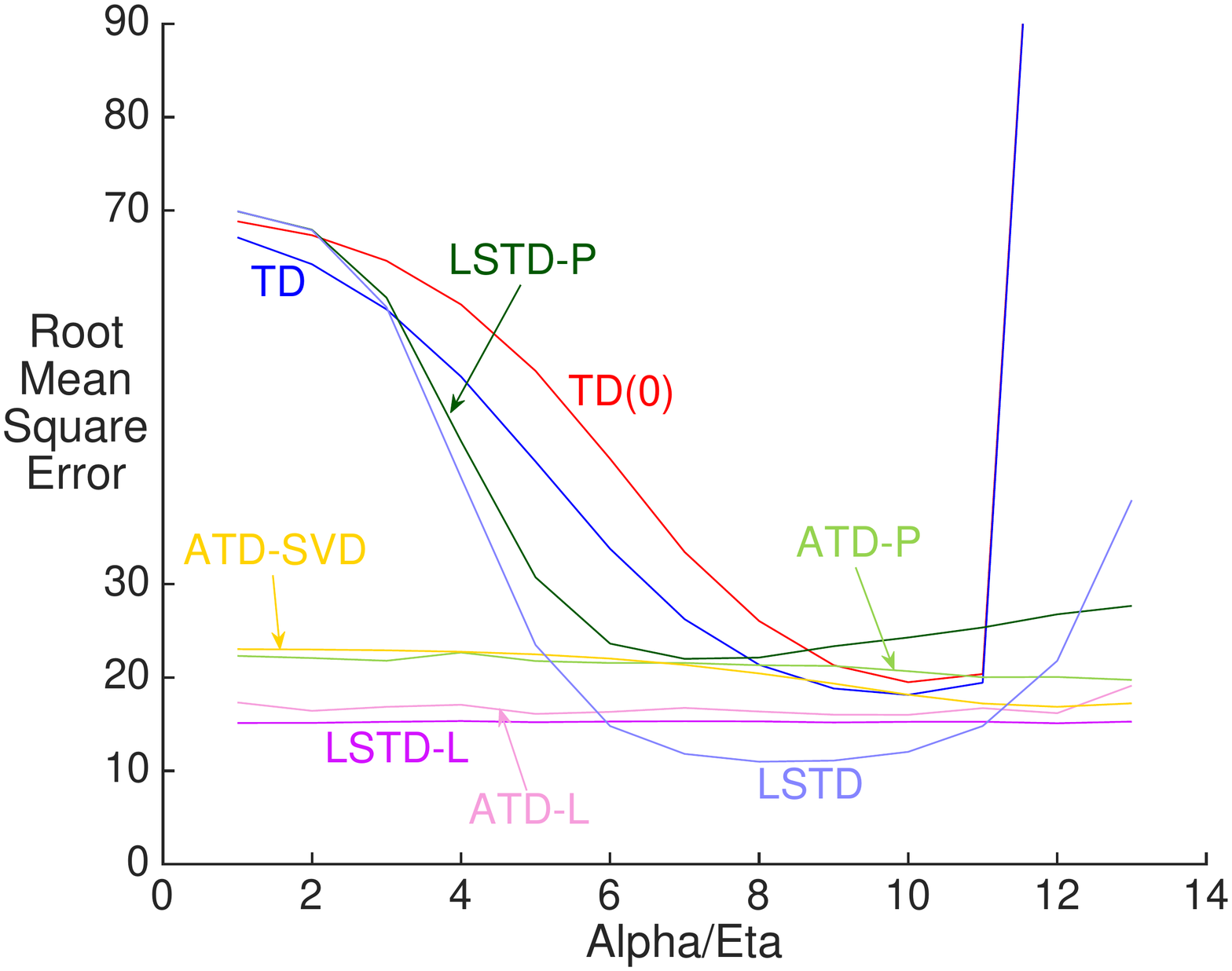}\label{fig:pd_gau_rbf50rsensi}}	
	\begin{minipage}{\figwidththree}
	\vspace{-8.0cm}
	\caption{ 
		%\small
		The two sensitivity figures are corresponding to the above two learning curves on Puddle World domain. Note that we sweep initialization for LSTD-P, but keep initialization parameter fixed across all other settings. The one-side projection is almost insensitive to initialization and the corresponding ATD version is insensitive to regularization. Though ATD-SVD also shows insensitivity, performance of ATD-SVD is much worse than sketching methods for the RBF representation. And, one should note that ATD-SVD is also much slower as shown in the below figures. 
	}\label{fig_pd_mcar_sensi}
	\end{minipage}
	%\vspace{-0.5cm}	
\end{figure*}

\begin{figure*}[htp!]
	\centering
	\subfigure[Mountain Car, Tile coding, $\rdim = 25$]{
		\includegraphics[width=\figwidththree]{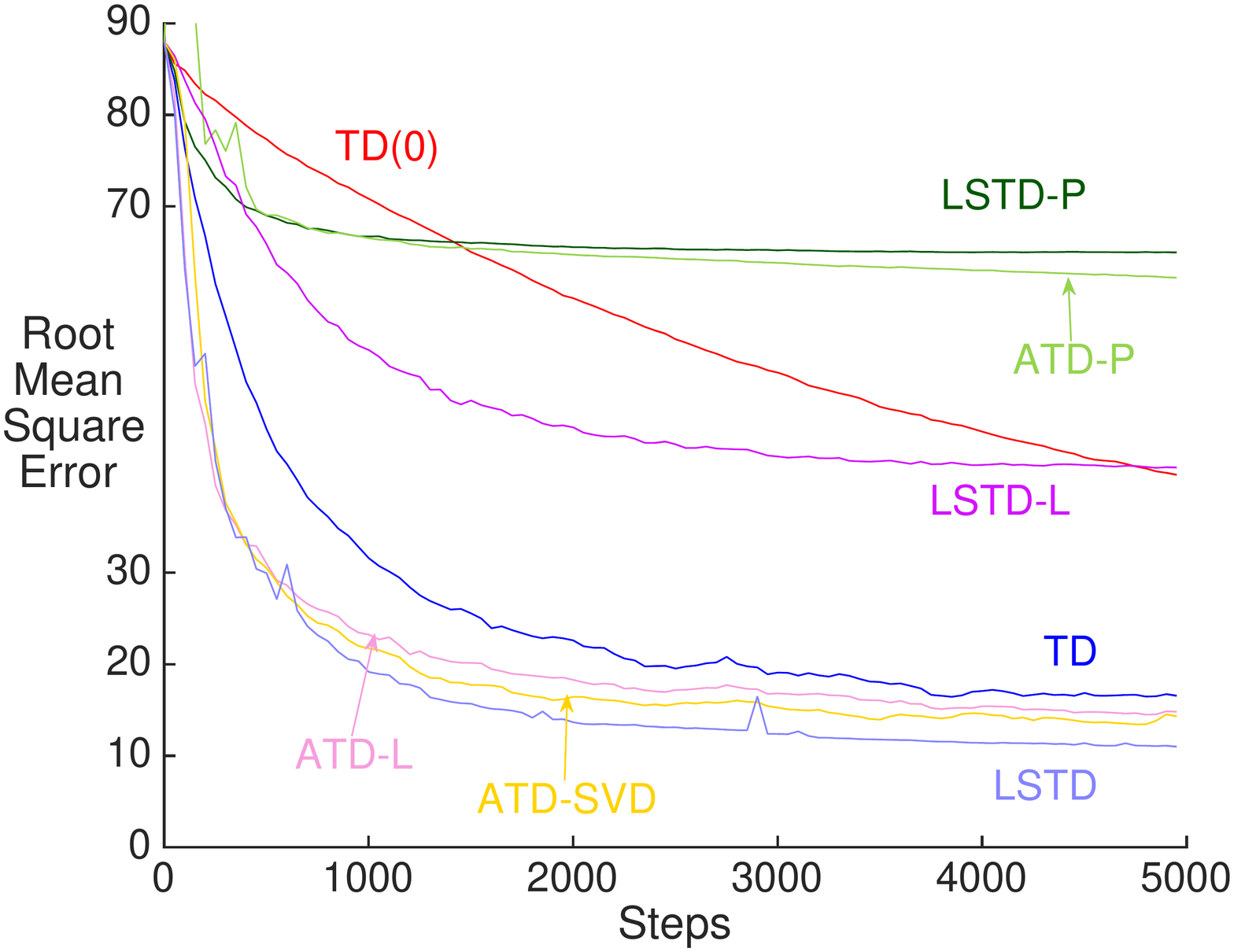} \label{fig:mcar_gau_tile25r}} 
	\subfigure[Mountain Car, Tile coding, $\rdim = 50$]{
		\includegraphics[width=\figwidththree]{figures/mcar_gau_tile50r.pdf} 
		}
	\subfigure[Mountain Car, Tile coding, $\rdim = 75$]{
		\includegraphics[width=\figwidththree]{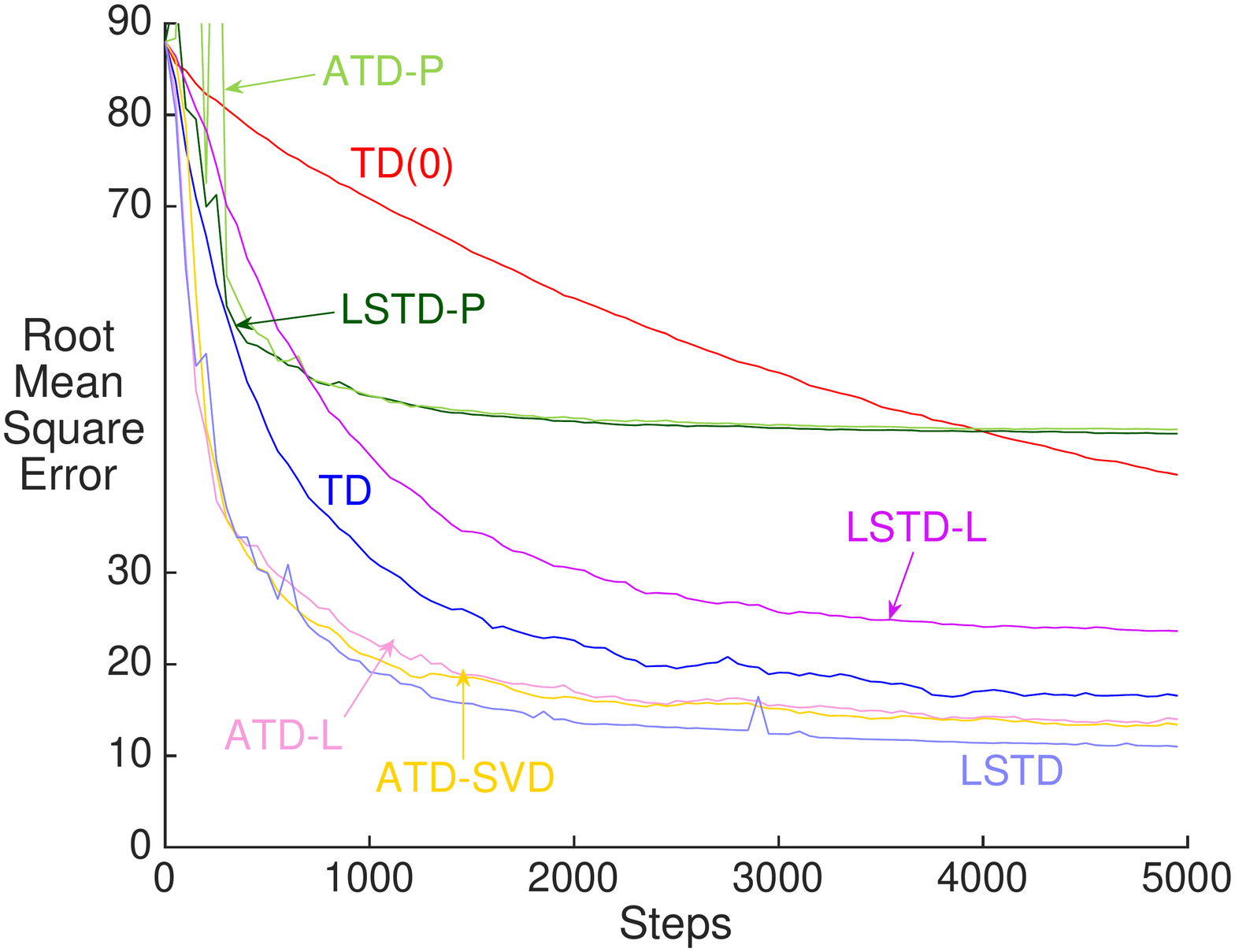}\label{fig:mcar_gau_tile75r}}
	%\begin{minipage}{\figwidththree}
	%	\vspace{-10.0cm}
	\caption{ 
		Change in performance when increasing $\rdim$, from $25$ to $75$.  We can draw similar conclusions to the same experiments in Puddle World in the main text. Here, the unbiased of ATD-L is even more evident; even with as low a dimension as 25, it performs similarly to LSTD. 
	}\label{fig:mcar_tile_rank_lc}
	%\end{minipage}
	%\vspace{-0.5cm}	
\end{figure*}

\begin{figure*}[htp!]
	\centering
	\subfigure[Acrobot, tile coding, $\rdim = 50$]{
		\includegraphics[width=\figwidththree]{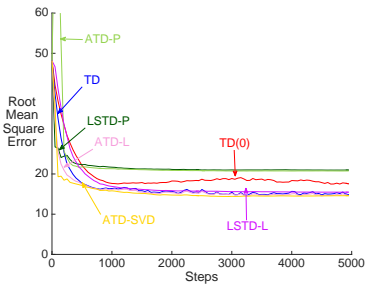} \label{fig:acro_gau_tile50r}} 
	\subfigure[Acrobot, tile coding, $\rdim = 50$]{
		\includegraphics[width=\figwidththree]{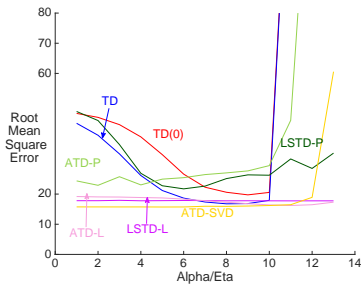} 
	}
	\subfigure[Acrobot, RBF, $\rdim = 75$]{
		\includegraphics[width=\figwidththree]{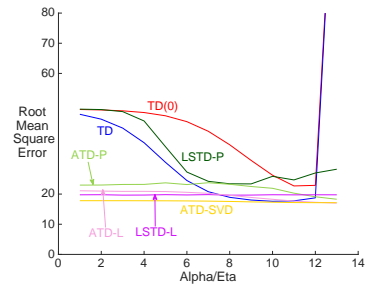}\label{fig:acro_gau_rbf75r}}
	%\begin{minipage}{\figwidththree}
	%	\vspace{-10.0cm}
	\caption{ 
	Additional experiments in Acrobot, for tile coding with $\rdim = 50$ and for RBFs with $\rdim = 75$. 
		%Tile coding representation on Acrobot domain with projected dimension $\rdim = 50$ and its corresponding sensitivity figures. The figure \textbf{(c)} is corresponding to the learning curve in figure~\ref{fig:energy_tile50r} in the main body. 
	}\label{fig:mcar_rank_lc}
	%\end{minipage}
	%\vspace{-0.5cm}	
\end{figure*}

\end{document}